\newcommand{\VECTOR}[1]{\vec{#1}}
\newcommand{\keywords}[1]{\par\addvspace\baselineskip
\noindent\keywordname\enspace\ignorespaces#1}
\lstdefinelanguage{pseudo}{
  basicstyle=\small,
  %frame=lines,
  numbers=left, %numbers=right,
  stepnumber=1,
%  numbersep=1cm,
  numberstyle=\tiny,
  morekeywords={if, then, else, for, while, from, to, do, done, return,
    function, is, and, or},
  sensitive=true,%
  mathescape=true,
  morecomment=[l]\#,%
  morestring=[b]',%
}
\definecolor{darkgreen}{rgb}{0,0.4,0}
\definecolor{purple}{rgb}{0.4,0,1}
\definecolor{brown}{rgb}{0.6,0.1,0}
\definecolor{white}{rgb}{1,1,1}
\definecolor{black}{rgb}{0,0,0}
\newcommand{\urlsmall}[1]{{\scriptsize\url{#1}}}
\newcommand{\INVISIBLE}[1]{{\textcolor{white}{{#1}}}}
\newcommand{\NOTHING}[0]{\INVISIBLE{\em hack}}
\newcommand{\DEF}[1]{{\sl {#1}}}
\newcommand{\DISJOINTUNION}[0]{\uplus}
\newcommand{\MINF}[0]{-\infty}
\newcommand{\PINF}[0]{+\infty}
\newcommand{\TPLUS}[0]{\oplus}
\newcommand{\TTIMES}[0]{\odot}
\newcommand{\BIGTPLUS}[0]{\displaystyle\bigoplus}
\newcommand{\BIGTTIMES}[0]{\displaystyle\bigodot}
\newcommand{\SBIGTPLUS}[0]{\bigoplus}
\newcommand{\SBIGTTIMES}[0]{\bigodot}
\newcommand{\BIGTPLUSSMALLER}[0]{\bigoplus}
\newcommand{\BIGTTIMESSMALLER}[0]{\bigodot}
\newcommand{\UNION}[0]{\cup}
\newcommand{\UNIVERSE}[0]{\mathbb{U}}
\newcommand{\NATURALS}[0]{\mathbb{N}}
\newcommand{\INTEGERS}[0]{\mathbb{Z}}
\newcommand{\RATIONALS}[0]{\mathbb{Q}}
\newcommand{\TRADITIONALUNIVERSE}[0]{\INTEGERS}
\newcommand{\EQD}[0]{\triangleq}
\newcommand{\BOTTOM}[0]{\bot}
\newcommand{\TURN}[0]{\lambda}
\newcommand{\STURN}[0]{\Lambda}
\newcommand{\PLAYER}[0]{\mathcal{P}}
\newcommand{\OPPONENT}[0]{\mathcal{O}}
\newcommand{\POSITION}[0]{\pi}
\newcommand{\POSITIONS}[0]{\mathbb{P}}
\newcommand{\TERMINALPOSITIONS}[0]{\POSITIONS_{T}}
\newcommand{\ALGEBRA}[0]{\mathcal{A}}%{\mathscr{A}}
\newcommand{\TO}[0]{\rightarrow}
\newcommand{\FROM}[0]{\leftarrow}
\newcommand{\TOSTAR}[0]{\TO^{*}}
\newcommand{\TOCONTEXT}[0]{\TO_{c}}
\newcommand{\TOCONTEXTSTAR}[0]{\TO_{c}^{*}}
\newcommand{\DOWNTO}[0]{\downarrow}
\newcommand{\DOWNTOSTAR}[0]{\downarrow_{*}}
\newcommand{\PAYOFF}[0]{p}
\newcommand{\OVERPAYOFF}[0]{\bar{p}}
\newcommand{\EVALUATION}[0]{v_p}
\newcommand{\SPLUS}[0]{\sum}
\newcommand{\STIMES}[0]{\prod}
\newcommand{\OPENSEQUENCE}[0]{\langle}
\newcommand{\CLOSESEQUENCE}[0]{\rangle}
\newcommand{\EMPTYSEQUENCE}[0]{\OPENSEQUENCE\CLOSESEQUENCE}
\newcommand{\WHATEVER}[0]{t} % To do: remove this definition and correct errors
\newcommand{\TERM}[0]{t}
\newcommand{\TERMS}[0]{{\VECTOR{\TERM}}}
\newcommand{\VALUE}[0]{v}
\newcommand{\VALUES}[0]{{\VECTOR{\VALUE}}}
\newcommand{\IMPLIES}[0]{\Rightarrow}
\newcommand{\SYNTAX}[0]{S}
\newcommand{\REACHABLES}[1]{{#1}\mathord\downarrow}
\newcommand{\WILL}[0]{will}
\newcommand{\TITLE}[0]{How to correctly prune tropical trees}
\newcommand{\GRAMMAR}[0]{\mathscr{G}}
\newcommand{\TALPHA}[0]{$\alpha$}
\newcommand{\TBETA}[0]{$\beta$}
\newcommand{\ALPHABETA}[0]{\TALPHA-\TBETA}
\newcommand{\ONE}[0]{\bf 1}
\newcommand{\ZERO}[0]{\bf 0}
\newcommand{\SHORTTITLE}[0]{\TITLE}%{\TODO{Short foo}}
\newcommand{\FINANCEDBY}[0]{This work is partially financed by Marie Curie action n.~29849 Websicola
and ANR-06-JCJC-0122.}
\newcommand{\ARXIV}[1]{#1}
\newcommand{\IFARXIV}[1]{#1}
\begin{document}
\IFARXIV{\linespread{0.925}}{\linespread{0.957}} % Yes, this is cheating. But the effect is invisible :-)

\mainmatter  % start of an individual contribution

% first the title is needed
\title{\TITLE{\footnotetext{\FINANCEDBY}\ARXIV{\footnotetext{The final publication of this paper is available at \url{www.springerlink.com}.}}}}

% a short form should be given in case it is too long for the running head
\titlerunning{\SHORTTITLE}

% the name(s) of the author(s) follow(s) next
%
% NB: Chinese authors should write their first names(s) in front of
% their surnames. This ensures that the names appear correctly in
% the running heads and the author index.
%
\author{Jean-Vincent Loddo%
%\thanks{Please note that the LNCS Editorial assumes that all authors have used
%the western naming convention, with given names preceding surnames. This determines
%the structure of the names in the running heads and the author index.}%
\and Luca Saiu}
\authorrunning{Jean-Vincent Loddo and Luca Saiu}
%%% (feature abused for this document to repeat the title also on left hand pages)

% the affiliations are given next; don't give your e-mail address
% unless you accept that it will be published
\institute{
Laboratoire d'Informatique de l'Université Paris Nord - UMR 7030\\
Université Paris 13 - CNRS\\
99, avenue Jean-Baptiste Clément - F-93430 Villetaneuse\\
%\mailsa\\
\path|{loddo, saiu}@lipn.univ-paris13.fr|}%|\\
%\path|loddo@lipn.univ-paris13.fr|\\
%\path|saiu@lipn.univ-paris13.fr|
%\\\url{http://www-lipn.univ-paris13.fr/~loddo}
%\url{http://www-lipn.univ-paris13.fr/~saiu}
%}

%
% NB: a more complex sample for affiliations and the mapping to the
% corresponding authors can be found in the file "llncs.dem"
% (search for the string "\mainmatter" where a contribution starts).
% "llncs.dem" accompanies the document class "llncs.cls".
%

\toctitle{Lecture Notes in Computer Science}
%\tocauthor{Authors' Instructions}
\tocauthor{\TITLE}
\maketitle

\begin{abstract}
We present {\em tropical games}, a generalization of combinatorial
{\em min-max} games based on tropical algebras.
Our model breaks the traditional symmetry of rational zero-sum games
where players have exactly opposed goals ($min$ vs. $max$),
 is
more widely applicable than {\em min-max} and also supports a form of pruning,
despite it being less effective than {\em \ALPHABETA}.
Actually, {\em min-max} games may be seen as particular cases where both the game and
its dual are tropical: when the dual of a tropical game is also tropical, the
power of \ALPHABETA\ is completely recovered.
We formally develop the model and prove that the tropical pruning strategy is
correct, then conclude by showing how the problem of approximated parsing can be
modeled as a tropical game, profiting from pruning.

%% \TODOQ{at least 70 and at most 150 words}
%% The abstract should summarize the contents of the paper and should
%% contain at least 70 and at most 150 words. It should be written using the
%% \emph{abstract} environment.

\keywords{combinatorial game, search, alpha-beta pruning,
  rational game, tropical algebra, tropical game,
  term,
  rewriting,
  logic,
  parsing
%, ambiguity
%, error reporting, \TODOQ{nondeterminism}
}
\end{abstract}

%%%%%%%%%%%%%%%%%%%
\section{Introduction}
We are all familiar with games such as Chess or Checkers. Such games are
purely \DEF{rational} as they do not involve any element of chance; they are
also \DEF{zero-sum}, as the players' interests are dual: what one ``wins'', the
other ``loses'' --- which is the origin of the $min$-$max$ evaluation
mechanism.
The two fundamental questions to be asked in a rational game are {\em ``Who
  will win?''} and {\em ``How much will she win?''}. Answering such questions
involves \DEF{searching for a strategy} trough a (typically large) \DEF{game tree}.
Some optimized search techniques were developed, which in the case of
combinatorial two-player games include the \DEF{\ALPHABETA\ pruning} technique
\cite{tree-prune,alpha-beta-analysis--knuth}.
%: each
%player just tries to maximize her gain
%% \TODOQ{Many of the games we are familiar with, like
%% Chess or Checkers, are purely \DEF{rational} games, as they don't
%% involve any element of chance: each player tries to maximize her gain.
%% In \DEF{zero-sum} games like these the players' interests are perfectly dual:
%% what one ``wins'', the other ``loses'' --- which is the origin of the
%% $min$-$max$ evaluation mechanism.
%% The two fundamental questions to be asked in a rational game are {\em ``Who
%%   will win?''} and {\em ``How much will she win?''}. Answering such questions
%% involves \DEF{searching for a strategy} trough a (typically large) \DEF{game tree}.
%% Some optimized search techniques were developed, which in the case of
%% combinatorial two-player games include the \DEF{\ALPHABETA\ pruning} technique
%% \cite{tree-prune,alpha-beta-analysis--knuth}.
%% }
\ALPHABETA\ is not an approximated algorithm: its correctness relies on
the mutual distributive properties of $min$ and $max$.
In this work we explore the implications of assuming only {\em one} player to
be rational, breaking the symmetry of the traditional ``double-sided''
rationality. Quite unsurprisingly our \DEF{tropical $\alpha$-pruning} depends
on just {\em one} distributive property,
% \MAYBE{instead of two},
 a requirement satisfied by {\em tropical
algebras} (Section \ref{alpha-beta-pruning}).

Following the style introduced by
\cite{phd-thesis--loddo} and \cite{alpha-beta-logic--loddo--di-cosmo},
we will distinguish two aspects of two-player combinatorial games: a first
one that we call \DEF{syntactic}, consisting in a description of the possible
game positions and the valid moves leading from a position to another;
%such a relation
the game syntax
 is the formal equivalent of the intuitive notion of the ``game rules''.
By contrast the \DEF{semantic} aspect is concerned about the interpretation of
the game according to the interests of the players, and ultimately about the
answer to the two fundamental questions above.
Our semantics will be based on tropical algebras, and as a consequence our
technique is widely applicable, relying as it does only on their comparatively
weak hypotheses.

We formally define tropical $\alpha$-pruning and prove its soundness, as our
main contribution (Section~\ref{tropical-games}).
A further contribution consists in our formalization of game evaluation and
tropical (and \ALPHABETA) cuts as a small-step semantics, so that proofs can reuse
the results of term-rewriting theory.

Actually, our soundness result subsumes
other works proving \ALPHABETA's soundness over distributive lattices such
as \cite{alpha-beta-logic--loddo--di-cosmo}
and (later) \cite{alpha-beta-pruning-under-partial-orders},
since distributive lattices are bi-tropical structures (Definition~\ref{bi-tropical}).

We conclude by proposing the algorithm design style \DEF{Choose-How-To-Divide and
  Conquer} meant for attacking even apparently unrelated search problems  as
tropical games; we develop approximated parsing as one such problem by showing how it profits from $\alpha$-pruning
(Section~\ref{choose-how-to-divide-and-conquer}).
%%%%%%%%%%%%%%%%%%%
\section{Combinatorial game syntax and semantics}
\label{syntax-and-semantics}
%% We will distinguish two aspects of two-player combinatorial games: a first
%% one that we call \DEF{syntactic}, consisting in a description of the possible
%% game positions and their relation; such a relation is the formal equivalent of
%% the intuitive notion of the ``game rules''.

%% By contrast the \DEF{semantic} aspect is concerned about the 
%% way in which the game may be interpreted according to the interests of the
%% players: intuitively, the semantics tells us ``who wins'', and ``how much''.

%% Such an explicit separation also allows us to have several different semantics
%% for a single given syntax.

%%%%%
\subsection{Syntax}
We speak about ``syntax'', hinting at formal grammars, in that some initial
game positions are given, together with some ``rule'' allowing to derive
successive positions from those: in this way a game can be seen as the tree
of all the possibilities of playing it --- the tree of all the possible matches.

\begin{definition}[Syntax]
%We formalize the syntactic aspect of a game into a \DEF{syntax} or \DEF{arena}
A game \DEF{syntax} or \DEF{arena} is
a triple $\SYNTAX = (\POSITIONS, \TURN, succ)$, where:
\begin{itemize}
\item
$\POSITIONS$ is the set of all \DEF{game positions}.

\item
the \DEF{turn function} $\TURN : \POSITIONS \TO \{\PLAYER, \OPPONENT\}$,
says whose turn it is: $\PLAYER$ for
``player'' or $\OPPONENT$ for ``opponent''.

\item
the \DEF{successor function} $succ$, taking a game position and returning
all the positions reachable with valid moves from there;
 $succ : \POSITIONS \TO \POSITIONS^{*}$.
\end{itemize}

Given $\SYNTAX = (\POSITIONS, \TURN, succ)$, we define:
\begin{itemize}
\item
the set of \DEF{terminal positions}
$\TERMINALPOSITIONS = \{\POSITION \in \POSITIONS\ |\ succ(\POSITION) = \EMPTYSEQUENCE\}$.

\item
the \DEF{dual arena} $\SYNTAX^{\BOTTOM} = (\POSITIONS, \TURN^{\BOTTOM}, succ)$,
of course with
$\TURN^{\BOTTOM} : \POSITIONS \TO \{\PLAYER, \OPPONENT\}$,
where for any
$\POSITION \in \POSITIONS$ we have
$\TURN^{\BOTTOM}(\POSITION) \neq \TURN(\POSITION)$.

\item
the $move$ relation is the binary version of the $succ$ relation:
for all $\POSITION, \POSITION' \in \POSITIONS$,
$move(\POSITION, \POSITION')$ iff
$\POSITION' = \POSITION_i$ for some $i$, where
$succ(\POSITION) = \OPENSEQUENCE \POSITION_1 ... \POSITION_n \CLOSESEQUENCE$.
\end{itemize}

The arena is called \DEF{alternate-turn} iff
$move(\POSITION, \POSITION')$ implies $\TURN(\POSITION) \neq \TURN(\POSITION')$.

If $move$ is Nötherian we speak about \DEF{Nötherian} or \DEF{finite arena}.

\end{definition}

\begin{remark}[Alternate-turn arenas]
\label{alternate-turn-transform}
It is possible to systematically make a game alternate-turn by ``collapsing''
all the {\em sequences of consecutive moves of the same player} into single moves.
\end{remark}

%% Notice that we do not impose any restriction on the turn structures,
%% and that in particular the game is not forced to be alternate-turn.

%Of course 
One of the most important ideas in Game Theory is the \DEF{strategy},
containing a plan to win the game --- a player saying to herself ``if this happens
I should do that, but if this other thing happens I should do that, and so
on''.
It should be noticed that a strategy is only related to the syntactic part of
a game, being independent, {\em per se}, from the game evaluation. In
particular, a strategy may very well not be winning.

\begin{definition}[Strategy]
Let $\SYNTAX = (\POSITIONS, \TURN, succ)$ be an arena, and $\POSITION \in
\POSITIONS$ be a position. We define:
\begin{itemize}
\item
the \DEF{reachable positions from $\POSITION$} as the right elements of the reflexive-transitive
closure of the relation $succ$: $\REACHABLES{\POSITION} = succ^{*}(\POSITION)$;

\item
a \DEF{global strategy} $\sigma$, as a subset of the relation $succ$ which is:
\begin{itemize}
\item \DEF{deterministic} in $\PLAYER$ positions:\\
  for all $\POSITION \in \POSITIONS$ where $\TURN(\POSITION) = \PLAYER$,
  if $succ(\POSITION) = \OPENSEQUENCE \POSITION_1 ... \POSITION_n \CLOSESEQUENCE$
  then
  $\sigma(\POSITION) = \OPENSEQUENCE \POSITION_i \CLOSESEQUENCE$,
  for some $i$ such that $1 \le i \le n$.
\item \DEF{complete} in $\OPPONENT$ positions:\\
  for all $\POSITION \in \POSITIONS$ where $\TURN(\POSITION) = \OPPONENT$,
  $\sigma(\POSITION) = succ(\POSITION)$.
\end{itemize}

\item
a \DEF{strategy for the initial position $\POSITION$} is a global
strategy for the restricted arena
$\SYNTAX_\POSITION = (\REACHABLES{\POSITION}, \TURN|_{\REACHABLES{\POSITION}}, succ|_{\REACHABLES{\POSITION}})$,
where we indicate with $f|_{D}$ the restriction of a function $f$ to the
set $D$.
\end{itemize}
\end{definition}

%%%%%%%%%%%%
\subsection{Semantics}
Let us assume a finite game with syntax
$\SYNTAX = (\POSITIONS, \TURN, succ)$. Traditionally the two players have exactly opposed
interests and we assume, by convention, that the player $\PLAYER$ will
try to \DEF{minimize} the \DEF{payoff} of the final position
while the opponent $\OPPONENT$ will try to \DEF{maximize} it.

The ordinary way of {\em evaluating} such a finite game 
consists in labeling non-terminal nodes with the functions $min$ and $max$
(according to the turn), and terminal nodes with the \DEF{payoff} of the
terminal position $\PAYOFF(\POSITION)$.  Such
values are then ``propagated'' back, applying the function at each node to its
children's values. The final value at the root is called the \DEF{game value}:
it says {\em who} wins and {\em how much}, supposing {\em both} players to be
rational.

Hence, assuming $\PAYOFF : \TERMINALPOSITIONS \TO \TRADITIONALUNIVERSE$ in
accord to the tradition, the game value
$v_p : \POSITIONS \TO \TRADITIONALUNIVERSE$ could be simply defined as a function of
the initial position:
\[
\EVALUATION(\POSITION) =
\begin{cases}
\PAYOFF(\POSITION), & \POSITION \in \TERMINALPOSITIONS\\
min_{i=1}^{n}\ \EVALUATION(\POSITION_i), & succ(\POSITION) = \OPENSEQUENCE\pi_1...\pi_n\CLOSESEQUENCE, \TURN(\POSITION) = \PLAYER\\
max_{i=1}^{n}\ \EVALUATION(\POSITION_i), & succ(\POSITION) = \OPENSEQUENCE\pi_1...\pi_n\CLOSESEQUENCE, \TURN(\POSITION) = \OPPONENT
\end{cases}
\]
This classical definition has the obvious defect of only supporting 
the function $min$ and $max$; often for resolving actual games the preferred
structure is $\INTEGERS$, $\RATIONALS$ (possibly extended with $\MINF$ and
$\PINF$), floating point numbers, or some sort of tuples containing such
structures on which a topological order is defined. Hence, in order to be more
general, let us define $\UNIVERSE$ to be any 
set closed over two associative binary operations $\TPLUS$ and $\TTIMES$, where $\TPLUS$
will be associated to the player and $\TTIMES$ to the opponent. Assuming
$p : \POSITIONS \TO \UNIVERSE$, the definition above would become:
\[
\EVALUATION(\POSITION) =
\begin{cases}
\PAYOFF(\POSITION), & \POSITION \in \TERMINALPOSITIONS\\
\BIGTPLUS_{i=1}^{n} \EVALUATION(\POSITION_i), & succ(\POSITION) = \OPENSEQUENCE\pi_1...\pi_n\CLOSESEQUENCE, \TURN(\POSITION) = \PLAYER\\
\BIGTTIMES_{i=1}^{n} \EVALUATION(\POSITION_i), & succ(\POSITION) = \OPENSEQUENCE\pi_1...\pi_n\CLOSESEQUENCE, \TURN(\POSITION) = \OPPONENT
\end{cases}
\]

The extended $\EVALUATION$ above is a step forward, but it still has the
problem of only being well-defined on finite games.
We solve this problem by abandoning the functional definition of
$\EVALUATION$ altogether, and giving a {\em small-step semantics}
instead. Actually, this style will also be useful in
Section~\ref{pruning-soundness-proof} to prove the soundness of our pruning
technique.

\begin{remark}[Invariance under alternate-turn transformation]
\label{alternate-turn-transform-invariance}
It is easy to see that the transformation hinted at in
Remark~\ref{alternate-turn-transform} does not alter semantics, because of
the two associative properties.
\end{remark}

\begin{definition}[Game]
A \DEF{game} is
the triple $G = (\SYNTAX, \ALGEBRA, \PAYOFF)$, where
$\SYNTAX = (\POSITIONS, \TURN, succ)$ is the syntax,
$\ALGEBRA = (\UNIVERSE, \TPLUS, \TTIMES)$ is an algebra with associative
operations $\TPLUS$ and $\TTIMES$, and where
$p : \TERMINALPOSITIONS \TO \UNIVERSE$ is the payoff function.
\end{definition}

Sometimes we informally refer to syntactic or semantic properties as if they
belonged to a game, for example by speaking about ``Nötherian game'' instead of
``Game with Nötherian syntax''.

\subsubsection{Small-step operational semantics}
In the following, we assume a game $G = (\SYNTAX, \ALGEBRA, \PAYOFF)$, where
$\SYNTAX = (\POSITIONS, \TURN, succ)$
and
$\ALGEBRA = (\UNIVERSE, \TPLUS, \TTIMES)$.
\\
The configurations of our system consist of (ground) \DEF{terms} of $G$, recursively
defined as:
$Ter(G) =
 \POSITIONS
 \DISJOINTUNION
 \UNIVERSE
 \DISJOINTUNION
 (\{\SPLUS, \STIMES\} \times Ter(G)^{+})$:

\begin{itemize}
\item positions in $\POSITIONS$ indicate game positions still to be expanded (if not terminal)
  and evaluated (otherwise).

\item values in $\UNIVERSE$ denote the value, already fully computed, of some sub-terms.

\item
a complex term such as $\SPLUS \OPENSEQUENCE t_1 ... t_n \CLOSESEQUENCE$
or
$\STIMES \OPENSEQUENCE t_1 ... t_n \CLOSESEQUENCE$
indicates a position 
at some state of its evaluation; $\SPLUS$ or $\STIMES$ holding the turn
information, and $t_1 ... t_n$ representing the game subterms from
that state on.
\end{itemize}
It is crucial not to mistake {\em terms of $G$}, which represent partially expanded game
trees, for {\em game positions}, which in practice will also tend to be structured
symbolic terms, but can be considered atomic at a high level: the rewrite rules shown
in the following work on $Ter(G)$, not on $\POSITIONS$.

\paragraph{Syntactic conventions}
We use (possibly with subscripts or primes)
$\POSITION$ to indicate positions in $\POSITIONS$,
$s$ and $t$ for generic terms, $v$ for values in $\UNIVERSE$,
%$\VECTOR{v}$ for sequences of values,
$\VECTOR{t}$ and $\VECTOR{z}$ for of terms in $Ter(G)$.
Sequences are allowed to be empty, if not specified otherwise in a side condition.
Just to make the notation more compact we will write
$\SPLUS \VECTOR{t}$
instead of
$(\SPLUS, \VECTOR{t})$ and
$\STIMES \VECTOR{t}$
for
$(\STIMES, \VECTOR{t})$.
We write $\STURN$ instead of either $\SPLUS$ or $\STIMES$, just to avoid
duplicating otherwise identical rules.
Sequences are written with no commas, and parentheses or brackets are used to group when needed.

\begin{prooftree}
  \AxiomC{$\POSITION \in \TERMINALPOSITIONS$}
  \AxiomC{$p(\POSITION) = v$}
  \LeftLabel{[Payoff]}
  \RightLabel{}
  \BinaryInfC{$\POSITION
    \TO
    v$}
\end{prooftree}

\begin{prooftree}
  \AxiomC{$succ(\POSITION) = \VECTOR{t}$}
  \AxiomC{$\TURN(\POSITION) = \PLAYER$}
  \LeftLabel{[$\PLAYER$-expand]}
  \RightLabel{$\#\VECTOR{t} \geq 1$}
  \BinaryInfC{$\POSITION
    \TO
    \SPLUS \VECTOR{t}$}
\end{prooftree}

\begin{prooftree}
  \AxiomC{$succ(\POSITION) = \VECTOR{t}$}
  \AxiomC{$\TURN(\POSITION) = \OPPONENT$}
  \LeftLabel{[$\OPPONENT$-expand]}
  \RightLabel{$\#\VECTOR{t} \geq 1$}
  \BinaryInfC{$\POSITION
    \TO
    \STIMES \VECTOR{t}$}
\end{prooftree}

%% \begin{prooftree}
%%   \AxiomC{$\TERM \TO \TERM'$}
%%   \LeftLabel{[Depth-first-leftmost]}
%%   \RightLabel{$0 \le \#\VECTOR{v} \le 1$}
%%   \UnaryInfC{$\STURN \OPENSEQUENCE \VECTOR{v}\ \TERM\ \VECTOR{z} \CLOSESEQUENCE
%%               \TO
%%               \STURN \OPENSEQUENCE \VECTOR{v}\ \TERM'\ \VECTOR{z} \CLOSESEQUENCE$}
%% \end{prooftree}

%% \begin{prooftree}
%%   \AxiomC{\NOTHING}
%%   \LeftLabel{[$\PLAYER$-reduce]}
%%   \RightLabel{$v_1 \TPLUS v_2 = v$}
%%   \UnaryInfC{$\SPLUS  \OPENSEQUENCE v_1 v_2 \VECTOR{z} \CLOSESEQUENCE
%%     \TO
%%     \SPLUS \OPENSEQUENCE \VALUE \VECTOR{z} \CLOSESEQUENCE$}
%% \end{prooftree}

%% \begin{prooftree}
%%   \AxiomC{\NOTHING}
%%   \LeftLabel{[$\OPPONENT$-reduce]}
%%   \RightLabel{$v_1 \TTIMES v_2 = v$}
%%   \UnaryInfC{$\STIMES  \OPENSEQUENCE v_1 v_2 \VECTOR{z} \CLOSESEQUENCE
%%     \TO
%%     \STIMES \OPENSEQUENCE \VALUE \VECTOR{z} \CLOSESEQUENCE$}
%% \end{prooftree}

\begin{prooftree}
  \AxiomC{\NOTHING}
  \LeftLabel{[$\PLAYER$-reduce]}
  \RightLabel{$v_1 \TPLUS v_2 = v$}
  \UnaryInfC{$\SPLUS \VECTOR{t} \OPENSEQUENCE v_1\ v_2 \CLOSESEQUENCE \VECTOR{z} 
    \TO
    \SPLUS \VECTOR{t} \OPENSEQUENCE \VALUE \CLOSESEQUENCE \VECTOR{z} $}
\end{prooftree}

\begin{prooftree}
  \AxiomC{\NOTHING}
  \LeftLabel{[$\OPPONENT$-reduce]}
  \RightLabel{$v_1 \TTIMES v_2 = v$}
  \UnaryInfC{$\STIMES \VECTOR{t} \OPENSEQUENCE v_1\ v_2 \CLOSESEQUENCE \VECTOR{z}
    \TO
    \STIMES \VECTOR{t} \OPENSEQUENCE \VALUE \CLOSESEQUENCE \VECTOR{z}$}
\end{prooftree}

\begin{prooftree}
  \AxiomC{\NOTHING}
  \LeftLabel{[Return]}
  \RightLabel{}
  \UnaryInfC{$\STURN \OPENSEQUENCE \VALUE \CLOSESEQUENCE \TO v$}
\end{prooftree}

\begin{prooftree}
  \AxiomC{$\WHATEVER \TO \WHATEVER'$}
  \LeftLabel{[Context]}
  \RightLabel{for all contexts $C$}
  \UnaryInfC{$C[\WHATEVER] \TOCONTEXT C[\WHATEVER']$}
\end{prooftree}

[Payoff] simply replaces a terminal position with its value in $\UNIVERSE$, by
means of the payoff function.
[$\PLAYER$-expand] and [$\OPPONENT$-expand] expand a 
position, generating its successors and keeping track of the turn, which will
be important at reduction time.
%[Depth-first-leftmost] force the evaluation
%order to be {\em depth-first, leftmost} by imposing that a sub-term may be
%expanded only if preceded by at most one completely reduced value --- if two or
%more values are on the left, they have be combined before this rule can fire.
[$\PLAYER$-reduce] and [$\OPPONENT$-reduce] combine two values
%, which must
%be on the left,
into one, using $\TPLUS$ for the player and $\TTIMES$ for the
opponent. Notice that these two rules are sources of non-determinism.
[Return] unwraps a completely evaluated term containing a single value.
[Context] allows to use the other rules within nested terms (also
introducing non-determinism).
 
Notice that keeping the relation $\TOCONTEXT$
distinct from $\TO$ allows us, when needed, to see our semantics as
a {\em term rewriting system} (TRS) \cite{terese-term-rewriting}.

\begin{proposition}$\TOCONTEXT$ is strongly confluent.
\end{proposition}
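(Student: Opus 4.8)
\medskip
\noindent\emph{Proof strategy.} Reading the inference rules above as a term rewriting system on $Ter(G)$ in the sense of \cite{terese-term-rewriting}, with $\TOCONTEXT$ as its one-step rewrite relation (note $\TO \subseteq \TOCONTEXT$, obtained via the empty context), the plan is to show that $\TOCONTEXT$ is \emph{sub-commutative}: whenever $t \TOCONTEXT t_1$ and $t \TOCONTEXT t_2$ there is a term $t_3$ reachable from each of $t_1$ and $t_2$ in at most one $\TOCONTEXT$-step. Sub-commutativity implies strong confluence, which in turn implies confluence; it is the right notion here because $\TOCONTEXT$ need not be terminating on infinite games, so Newman's lemma is unavailable, and because the ``at most one step'' (rather than ``exactly one step'') cannot be strengthened: two distinct one-step contractions of the same term may already produce the same result (for instance over an idempotent algebra, as in ordinary \emph{min-max}), with no self-loop to close a genuine diamond.

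\noindent\emph{Reducing to a single critical case.} Two structural facts keep us close to the orthogonal situation. (i) The left-hand sides of [Payoff], [$\PLAYER$-expand] and [$\OPPONENT$-expand] are a single position $\POSITION \in \POSITIONS$, i.e.\ an atomic term: such a redex contains no proper sub-redex, and it cannot sit properly inside the pattern of another rule, since a position occurs only as an immediate argument of a node $\STURN\VECTOR{t}$, and the rules matching such a node ([$\PLAYER$-reduce], [$\OPPONENT$-reduce], [Return]) require the \emph{consumed} arguments to be \emph{values} --- which a position is not --- while copying the remaining argument sub-sequences ($\VECTOR{t}$, $\VECTOR{z}$) untouched. (ii) No rule erases a subterm that could host a redex: [Payoff] and the two expand rules have an atomic left-hand side; [$\PLAYER$-reduce] and [$\OPPONENT$-reduce] consume only two adjacent values (which contain no redex) and preserve $\VECTOR{t}$ and $\VECTOR{z}$ verbatim; [Return] consumes $\STURN\OPENSEQUENCE v \CLOSESEQUENCE$ with $v$ a value. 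Consequently, for two redexes of $t$ fired at tree-positions $p_1$ and $p_2$, either $p_1$ and $p_2$ are disjoint, or one is a prefix of the other but the outer step does not touch the inner redex; in all these cases the two steps commute and $t_3$ is reached in exactly one step on each side. One also checks that distinct rules never compete at one node: at a position, [Payoff] applies iff it is terminal and an expand rule applies iff it is not (the side condition $\#\VECTOR{t} \ge 1$ rules out $\EMPTYSEQUENCE$), so they are mutually exclusive; at a node $\STURN\VECTOR{t}$, [Return] requires exactly one argument while the reduce rules require at least two.

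\noindent\emph{The one genuine overlap.} It remains to treat $p_1 = p_2$ pointing at a node $\STURN\VECTOR{t}$, both steps being its reduce rule applied to two adjacent pairs of values of the argument sequence. If the two pairs coincide the result is identical ($v_1 \STURN v_2 = v$ fixes $v$), so $t_1 = t_2$. If they are disjoint, reducing the other pair in each of $t_1$ and $t_2$ yields a common $t_3$ in one step. If they overlap, the node has the form $\STURN \VECTOR{t} \OPENSEQUENCE v_1\ v_2\ v_3 \CLOSESEQUENCE \VECTOR{z}$, with $t_1$ obtained by replacing $v_1\ v_2\ v_3$ with $(v_1 \STURN v_2)\ v_3$ and $t_2$ with $v_1\ (v_2 \STURN v_3)$; one further reduce step on each side reaches $\STURN \VECTOR{t} \OPENSEQUENCE w \CLOSESEQUENCE \VECTOR{z}$ with $w = (v_1 \STURN v_2) \STURN v_3 = v_1 \STURN (v_2 \STURN v_3)$, the last equality being precisely the associativity of $\STURN$ ($\TPLUS$ at $\SPLUS$-nodes, $\TTIMES$ at $\STIMES$-nodes). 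This exhausts the cases, so $\TOCONTEXT$ is sub-commutative and hence strongly confluent.

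\noindent\emph{Main obstacle.} There is no deep difficulty; the load-bearing part is the bookkeeping of the second step, namely being certain that the atomicity of positions and the linear, non-erasing shape of every left-hand side really do exclude every overlap except the associativity-closed one above. The only algebraic hypothesis that the argument actually uses is the associativity of $\TPLUS$ and $\TTIMES$; commutativity is not needed, since the reduce rules combine only \emph{adjacent} values and never permute the argument sequence.
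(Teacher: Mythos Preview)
Your proof is correct and follows essentially the same route as the paper: both reduce the question to a single nontrivial overlap---the reduce rule against itself on three adjacent values---and close it with associativity of $\TPLUS$ (resp.\ $\TTIMES$). The only presentational difference is that the paper packages the argument through Huet's criterion (left- and right-linear plus \emph{strongly closed} implies strongly confluent), whereas you establish sub-commutativity by a direct case analysis on redex positions; the underlying case split and the load-bearing algebraic step are identical.
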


\begin{proof}%[sketch]
For the purposes of this proof, we consider the small-step semantics
as a pure term-rewriting system, expressed in a slightly sugared
notation. The system does {\em not} need to be conditional (CTRS), since all
the rule premises can in fact be seen as structural constraints on syntactic
constructors. $\TPLUS$ and $\TTIMES$ should also be read as syntactic
constructors, with their associative properties written as rewrite rules. 
What is a variable in the rules becomes a (syntactic) variable in the
TRS; however, we will not exploit the full power of the formal system: 
reductions will only be applied to ground terms\footnote{We do not need the full
power of unification: from a programming point of view, {\em pattern matching} as used
in ML or Haskell is enough for our purposes.}.

Our TRS is trivially {\em left-} and {\em right-linear}, as no variable occurs
more than once in each side of a rule. By showing that our system is also {\em
  strongly closed}, strong confluence follows by Huet's Lemma~3.2 in \cite{confluence--huet}: ``If $\mathscr{R}$
is a left- and right-linear strongly closed term rewriting system, $\TO_{\mathscr{R}}$ is
strongly confluent''.

In order to show that the system is strongly-closed, we have to show that for
every critical pair $s, t$ there exist $s', t'$ such that
$s \TO^{*} t' \FROM^{\equiv} t$ and
$t \TO^{*} s' \FROM^{\equiv} s$
(as in \cite{confluence--huet} and \cite{terese-term-rewriting}), where
$\FROM^{\equiv}$ is the reflexive closure of $\FROM$.

The left-hand side of [$\PLAYER$-reduce] is
$\SPLUS \VECTOR{t} \OPENSEQUENCE v_1\ v_2 \CLOSESEQUENCE \VECTOR{z}$.
When this rule is used to generate a critical pair with any other rule, 
only a variable in $\VECTOR{t}$ or in $\VECTOR{z}$ can match, with the whole
left-hand side of the other rule. 
The resulting critical pair $s, t$ reaches confluence (to $s' = t'$) in one
step because redexes are non-overlapping. The same holds for [$\OPPONENT$-reduce].

The only rule pairs candidate for overlapping are [$\PLAYER$-reduce] with
itself, and [$\OPPONENT$-reduce] with itself; we only show the first one.
The only interesting case of overlapping is the term family
$\SPLUS \VECTOR{t} \OPENSEQUENCE v_1\ v_2\ v_3 \CLOSESEQUENCE \VECTOR{z}$,
generating the critical pair $s, t$. Notice that $s' \TO t'$ and vice-versa
because of the associativity of $\TPLUS$:
\[
\begin{array}{c}
\SPLUS \VECTOR{t} \OPENSEQUENCE v_1\ v_2\ v_3 \CLOSESEQUENCE \VECTOR{z}\ \ \ \ \ \\
\begin{array}{r c c c l}
& & \swarrow \ \ \ \ \ \ \ \searrow & & \\
s =&
\SPLUS \VECTOR{t} \OPENSEQUENCE (v_1 \TPLUS v_2) v_3 \CLOSESEQUENCE \VECTOR{z} &
&
\SPLUS \VECTOR{t} \OPENSEQUENCE v_1 (v_2 \TPLUS v_3) \CLOSESEQUENCE \VECTOR{z} &
= t\\
& \DOWNTO & & \DOWNTO & \\
s' = &
\SPLUS \VECTOR{t} \OPENSEQUENCE (v_1 \TPLUS v_2) \TPLUS v_3 \CLOSESEQUENCE \VECTOR{z}&
\ \ \leftrightarrows \ \ &
\SPLUS \VECTOR{t} \OPENSEQUENCE v_1 \TPLUS (v_2 \TPLUS v_3) \CLOSESEQUENCE \VECTOR{z}&
= t'\ \ \ \qed\\
\end{array}
\end{array}
\]

\end{proof}

\begin{definition}[Game tree]
Let $\rightarrowtriangle$ be the sub-rewrite system of $\TOCONTEXT$, made only
by the rules [$\PLAYER$-expand], [$\OPPONENT$-expand] and [Context]: given an
initial position $\POSITION_0 \in \POSITIONS$, the \DEF{set of game tree
 prefixes} from $\POSITION_0$ is the set %of configurations reachable from:
$T_{\POSITION_{0}} = \{t\ |\ \POSITION_0 \rightarrowtriangle^{*} t\}$.
The \DEF{game tree}, if it exists, is the tree
$t_{\POSITION_{0}} \in T_{\POSITION_{0}}$ whose positions are all terminal.
\end{definition}

The game tree $t_{\POSITION_{0}}$ is well-defined: when it exists it is unique.
Actually, the TRS defining $\rightarrowtriangle^{*}$ is non-ambiguous (there is no
{\em overlap} among any reduction rules) and left-linear: such a TRS
is called {\em orthogonal}, and any orthogonal TRS is confluent \cite{terese-orthogonality}.

\begin{proposition}
$\TOCONTEXT$ is normalizing for any Nötherian game.
\end{proposition}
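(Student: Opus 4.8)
The statement asserts that for a Nötherian game, the relation $\TOCONTEXT$ is normalizing, i.e. every term in $Ter(G)$ admits a terminating reduction sequence to a normal form. Since we have already established strong confluence of $\TOCONTEXT$, it suffices to exhibit \emph{one} terminating reduction sequence from an arbitrary term (strong confluence then promotes this to the property that \emph{every} maximal reduction sequence terminates, but for ``normalizing'' in the bare sense one terminating sequence is enough). The plan is therefore to define a measure on terms that strictly decreases along a carefully chosen reduction strategy, and to show this measure is well-founded.

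First I would fix a reduction strategy that eliminates the only potentially problematic rules --- the [$\PLAYER$-expand] and [$\OPPONENT$-expand] rules, which are the sole source of unbounded growth --- in a controlled way: always contract an expandable position that is ``innermost'' among positions, i.e. expand a position $\POSITION$ only when doing so is forced by the need to make progress. The key observation is that expanding a position $\POSITION$ replaces $\POSITION$ by $\STURN \VECTOR{t}$ where $\VECTOR{t} = succ(\POSITION)$; since $move$ is Nötherian, the multiset of positions occurring in a term, ordered by the multiset extension of $move^{-1}$ (equivalently, by the well-founded order induced by $move$ on $\POSITIONS$), strictly decreases under every expansion step, while [Payoff] removes a position outright and [$\PLAYER$-reduce], [$\OPPONENT$-reduce], [Return] do not introduce any positions at all. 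So the ``position content'' of a term, measured in the multiset ordering over $(\POSITIONS, move)$, is non-increasing overall and strictly decreasing on expansion steps.

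Next I would handle the reduction rules [$\PLAYER$-reduce], [$\OPPONENT$-reduce] and [Return], which do not touch positions. Once no position occurs in a term (the position-multiset is empty), the term is a finite expression built from $\SPLUS$, $\STIMES$ and values, and each of these three rules strictly reduces the total number of constructor symbols (or the number of values inside the sequences); this is an obvious syntactic induction. Combining the two measures lexicographically --- first the multiset of positions under the $move$-order, then the syntactic size of the term --- gives a well-founded measure that strictly decreases under the chosen strategy at every step, so the strategy terminates and lands on a normal form. Hence $\TOCONTEXT$ is normalizing.

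The main obstacle is not any single step but the bookkeeping: making precise that [Context] does not spoil the multiset argument (it does not, since rewriting inside a context only rewrites the subterm at the redex, and the position-multiset of the whole term is the disjoint union of the position-multiset of the redex with that of the surrounding context, the latter unchanged), and checking that expansion steps, which \emph{add} new positions, still strictly decrease the multiset order --- this is exactly where Nötherianness of $move$ is indispensable, because $\POSITION$ is replaced by finitely many positions each strictly smaller than $\POSITION$ in the well-founded order, and the multiset extension of a well-founded order is well-founded (Dershowitz--Manna). I would state this appeal explicitly. A minor subtlety worth a remark: the proposition only claims \emph{normalization}, not strong normalization; indeed without confluence one could worry about non-terminating sequences, but here strong confluence from the previous proposition together with the terminating strategy above actually yields that \emph{all} reductions terminate, so $\TOCONTEXT$ is in fact Nötherian on Nötherian games --- I would note this as a corollary if space permits.
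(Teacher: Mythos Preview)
Your core idea is sound and genuinely different from the paper's argument. The paper proves termination by a single natural-number interpretation: it defines $w(\POSITION)$ as (essentially) twice the number of nodes in the game tree rooted at $\POSITION$ --- a finite number by K\"onig's lemma, since $succ$ is finitely branching and $move$ is N\"otherian --- and extends this to an interpretation $f:Ter(G)\to\NATURALS$ with $f(v)=1$ and $f(\STURN\langle t_1\dots t_n\rangle)=1+\sum f(t_i)$; then it checks that every rule strictly decreases $f$. Your route via the Dershowitz--Manna multiset order on positions (under the well-founded order induced by $move$), refined lexicographically by syntactic size, is a legitimate alternative that avoids computing tree sizes and does not invoke K\"onig's lemma. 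Either approach yields strong normalization directly.

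Two points deserve correction. First, your detour through a ``carefully chosen strategy'' is unnecessary: your lexicographic measure strictly decreases under \emph{every} rule, not just along a particular strategy. For [Payoff] and the two expansion rules the position-multiset drops; for [$\PLAYER$-reduce], [$\OPPONENT$-reduce] and [Return] the position-multiset is unchanged but the term size drops; [Context] is handled because both components are additive over disjoint subterm occurrences. So you already have a reduction order compatible with the whole system, and hence termination outright --- there is no need to isolate a strategy and then ``upgrade''.

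Second, the corollary you sketch is false: strong confluence together with weak normalization does \emph{not} imply strong normalization. A one-line counterexample is the system $\{a\to a,\ a\to b\}$, which is strongly confluent (from the divergence $a\leftarrow a\to b$ one closes with $a\to b$ and $b\to^{=}b$) and weakly normalizing (via $a\to b$), yet admits the non-terminating sequence $a\to a\to\cdots$. Drop that remark; your measure already delivers strong normalization without it.
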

\begin{proof}
Let a game $G = (\SYNTAX, \ALGEBRA, \PAYOFF)$ where
$\SYNTAX = (\POSITIONS, \TURN, succ)$ and
$\ALGEBRA = (\UNIVERSE, \TPLUS, \TTIMES)$ be given.
We prove normalization by exhibiting a {\em reduction order} $<$
compatible with our rules \cite{terese-term-rewriting}. 

Let us define a \DEF{weight} function $w : \POSITIONS \TO \NATURALS$ to be a
particular instance of the higher-order function
$v_{\OVERPAYOFF} : \POSITIONS \TO \UNIVERSE$,
where
$\OVERPAYOFF(\POSITION) = 2$ for any $\POSITION \in \TERMINALPOSITIONS$ and
$\SBIGTPLUS_{i=1}^{n}x_i = \SBIGTTIMES_{i=1}^{n}x_i = 2 + \sum_{i=1}^{n}x_i$
for any $x \in \NATURALS^{*}$.
Intuitively, $w$ returns $2$ times the number of nodes in the game tree
for Nötherian games.

Let
$f : Ter(G)
     \TO
     \NATURALS$
 be:

\[
\begin{array}{l c l}
f(\POSITION) = w(\POSITION), & & \POSITION \in \POSITIONS
\\
f(v) = 1, & & v \in \UNIVERSE\\
f(\STURN \OPENSEQUENCE t_1 ... t_n \CLOSESEQUENCE) = 1 + \sum_{i=1}^{n}f(t_i) & \ \ &\\
\end{array}
\]

In the formula above and in the rest of this proof $\sum$ represents the sum
operation over $\NATURALS$.
We define our order on terms by using the interpretation $f$ on
$>_{\NATURALS}$: by definition, let
$ t_0 > t_1 \text{\ iff\ } f(t_0) >_{\NATURALS} f(t_1)$.
The order $>$ is trivially {\em stable}, as our terms do not contain variables.
$>$ is also {\em monotonic} ($f$ is increasing because
$+ : \NATURALS \times \NATURALS \TO \NATURALS$ is increasing),
{\em strict} ($>_{\NATURALS}$ is strict) and {\em well-founded}
($>_{\NATURALS}$ is well-founded). Hence, $>$ is a reduction order. 

In order to prove compatibility we show that for every rule $l \TO r$ we
have $l > r$, which by definition is equivalent to $f(l) >_{\NATURALS} f(r)$.
All equalities follow from definitions or trivial algebraic manipulations:
\begin{itemize}
\item
{}[Payoff]:
$f(\POSITION) = w(\POSITION) = \OVERPAYOFF(\POSITION) = 2 >_{\NATURALS} 1 = f(v)$.

\item
{\smallskip}[$\PLAYER$-expand], [$\OPPONENT$-expand]:
$f(\POSITION) = w(\POSITION) = 2 + \sum_{i=1}^{n}w(\POSITION_i) >_{\NATURALS}
1 + \sum_{i=1}^{n}w(\POSITION_i) = 1 + \sum_{i=1}^{n}f(t_i) = f(\STURN\ \VECTOR{t})$.

\item
{\smallskip}[$\PLAYER$-reduce], [$\OPPONENT$-reduce]:
$f(\STURN\ \VECTOR{t} \OPENSEQUENCE v_1 v_2 \CLOSESEQUENCE \VECTOR{z}) =
\sum_{i=1}^{\#\VECTOR{t}}f(t_i) + f(v_1) + f(v_2) + \sum_{i=1}^{\#\VECTOR{z}}f(z_i) =
\sum_{i=1}^{\#\VECTOR{t}}f(t_i) + 1 + 1 + \sum_{i=1}^{\#\VECTOR{z}}f(z_i) >_{\NATURALS}
\sum_{i=1}^{\#\VECTOR{t}}f(t_i) + 1 + \sum_{i=1}^{\#\VECTOR{z}}f(z_i) =
f(\STURN\ \VECTOR{t} \OPENSEQUENCE v \CLOSESEQUENCE \VECTOR{z})$.

\item
{\smallskip}[Return]:
$f(\STURN\OPENSEQUENCE v \CLOSESEQUENCE) = 1 + 1 >_{\NATURALS} 1 = f(v)$.
\ \ \ \qed

\end{itemize}
\end{proof}

Intuitively, if a term converges then its
sub-terms also converge; said otherwise if a term converges in a context, then
it must also converge in the trivial (empty) context.
This is true because of the {\em non-erasing} nature of our system, different
from, for example, the $\lambda$-calculus having actual {\em reduction steps}
\cite{terese-orthogonality}.
More formally:

\begin{lemma}[Sub-term normalization]
\label{subterm-normalization-lemma}
Given a game $G = (\SYNTAX, \ALGEBRA, \PAYOFF)$ where
$\ALGEBRA = (\UNIVERSE, \TPLUS, \TTIMES)$,
for any term $t \in Ter(G)$ and any context $C$, if
there exists $v \in \UNIVERSE$ such that
$C[t] \TOCONTEXTSTAR v$ then there exists $v' \in \UNIVERSE$ such that
$t \TOCONTEXTSTAR v'$.
\end{lemma}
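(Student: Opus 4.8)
The plan is to argue by induction on the length $k$ of the given derivation $C[t] = u_0 \TOCONTEXT u_1 \TOCONTEXT \dots \TOCONTEXT u_k = v$, performing in the inductive step a case analysis on the position of the redex contracted by the first step $u_0 \TOCONTEXT u_1$ relative to the position of the hole of $C$ (call the subterm sitting there the \emph{marked occurrence} of $t$). If $k = 0$, then $C[t] = v \in \UNIVERSE$; since a value has no proper subterms, the hole of $C$ must be at the root, so $C$ is the empty context, $t = v$, and $t \TOCONTEXTSTAR v$ holds in zero steps.

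For the inductive step, suppose the first step contracts a redex at some occurrence $q$. \textbf{(A)} If $q$ lies at or below the marked occurrence of $t$, then the first step is itself a reduction $t \TOCONTEXT t'$ performed inside $t$, and $u_1 = C[t']$; since $C[t'] \TOCONTEXTSTAR v$ in $k-1$ steps, the induction hypothesis (applied to the context $C$ and the term $t'$) yields $v' \in \UNIVERSE$ with $t' \TOCONTEXTSTAR v'$, whence $t \TOCONTEXT t' \TOCONTEXTSTAR v'$. \textbf{(B)} If $q$ is disjoint from the marked occurrence, or $q$ lies strictly above it but the marked occurrence falls entirely within a subterm matched by a rule \emph{variable} (the $\VECTOR{t}$ or $\VECTOR{z}$ of a reduce rule), then the contraction leaves the marked occurrence intact — the rules being right-linear, it is moreover not duplicated — so $u_1 = C'[t]$ for some context $C'$, and the induction hypothesis applied to $C'[t] \TOCONTEXTSTAR v$ gives the claim. \textbf{(C)} The only remaining possibility is that $q$ lies strictly above the marked occurrence and the latter is \emph{not} preserved; inspecting the rules shows this forces $t$ to occupy an argument position constrained to be a value. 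Indeed the left-hand sides of [Payoff], [$\PLAYER$-expand] and [$\OPPONENT$-expand] are single positions and have no proper subterms; and the only proper subterms of the left-hand side of [Return], and of the left-hand sides of [$\PLAYER$-reduce]/[$\OPPONENT$-reduce], that are touched by a contraction are the value-matched arguments ($v$, resp.\ $v_1$ and $v_2$). Hence in case (C) we have $t \in \UNIVERSE$ and $t \TOCONTEXTSTAR t$ holds trivially with $v' = t$.

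The delicate point is case (C): one must make the residual bookkeeping precise, i.e.\ be sure that after \emph{any} single step either a copy of the marked occurrence survives (so the induction hypothesis applies through the shortened derivation) or else $t$ was already a value. This is exactly where the \emph{non-erasing}, left-linear shape of the system is used — no rule discards a non-value subterm — together with the observation that the only subterms a rule can ``consume'' are the already-computed values fed to [$\PLAYER$-reduce], [$\OPPONENT$-reduce] and [Return]; everything else is routine. As an alternative route one can first establish the one-level statement ``if $\STURN\OPENSEQUENCE t_1 \dots t_n \CLOSESEQUENCE \TOCONTEXTSTAR v \in \UNIVERSE$ then each $t_i \TOCONTEXTSTAR v_i$ for some $v_i \in \UNIVERSE$'' by the same first-step analysis, and then derive the lemma by structural induction on $C$; the two approaches require essentially the same work.
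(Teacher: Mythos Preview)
Your proof is correct and follows essentially the same approach as the paper: induction on the length of the derivation $C[t]\TOCONTEXTSTAR v$, with a case analysis in the inductive step on the position of the contracted redex relative to the marked occurrence of $t$. The paper organizes the cases as ``$s$ and $t$ disjoint'', ``$s$ contains $t$'', ``$t$ contains $s$'', which maps onto your (B)/(C)/(A); your base case $k=0$ is slightly cleaner than the paper's $n=1$, and your explicit appeal to right-linearity to rule out duplication is a nice touch the paper leaves implicit.
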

\begin{proof}
By induction over the derivation length $n$ of $C[t] \TOCONTEXTSTAR v$.
We look at the possible shape of the premise of the [Context] rule,
$s \TO s'$.
\begin{itemize}
\item
Base case, $n = 1$: $C[t] \TOCONTEXT v$.
The only applicable rules are [Payoff] and [Return]:
in the case of [Payoff], $C[t] = t$; in the case of [Return],
$t = v$. In either case, $t \TOCONTEXTSTAR v = v'$.
\item
Recursive case $n \IMPLIES n + 1$: $t_0 = C[t] \TOCONTEXT t_1 \TOCONTEXTSTAR v$. The inductive
hypothesis is that for any term $t'$ and context $C'$ if
$C'[t'] \TOCONTEXTSTAR v$ in $n$ or fewer steps, then
$t' \TOCONTEXTSTAR v'$. Three cases:
\begin{itemize}
\item
  $s$ and $t$ are disjoint sub-terms within $C$. \ Since the system is
  non-erasing $t$ has not been erased, i.e. $t_1 = C'[t]$; for inductive
  hypothesis $t \TOCONTEXTSTAR v'$. 
  
\item
  $s$ contains $t$. \ $s \TO s'$ may have as its premise [Return], in which case
  $s = \STURN \OPENSEQUENCE v \CLOSESEQUENCE$ and $t = v$. Otherwise the
  premise may be
  [$\PLAYER$-Reduce] or [$\OPPONENT$-Reduce]: either $t$ is one of the values,
  or it matches one of the variables, in which case there exists a context
  $C'$ such that $t_1 = C'[t]$; then the inductive hypothesis applies.

\item
  $t$ contains $s$. \ $t = C'[s]$, hence by definition of $\TOCONTEXT$ we have
  that $t$ can turn into $C'[s'] = t'$. There exists a context $C''$ where
  $C[s] = C''[C'[s]]$, hence $t_1 = C''[C'[s']]$. By induction hypothesis
  $t' = C'[s'] \TOCONTEXTSTAR v'$. \qed
\end{itemize}
\end{itemize}
\end{proof}

Normalization and confluence justify our re-definition
of the game value $\EVALUATION$ as the transitive closure of the transition
relation $\TOCONTEXT$:
\begin{definition}[Game Value]
Let a game, an initial position $\POSITION$ and a value $\VALUE$ be
given; we say that the game value from $\POSITION$ is $\VALUE$ (and we write
$\EVALUATION(\POSITION) = v$) if and only if
$\POSITION \TOCONTEXTSTAR v$.
%$\EVALUATION(\POSITION) = v \EQUIVBYDEFINITION \POSITION \TOCONTEXTSTAR v$
\end{definition}

%%%%%%%%%%%%%%%%%%%%
\section{\ALPHABETA\ pruning}
\label{alpha-beta-pruning}

The \DEF{\ALPHABETA\ algorithm}
\cite{tree-prune,alpha-beta-analysis--knuth}
is a method for computing the exact value of a $min$-$max$ combinatorial game
without exhaustively visiting {\em all} game positions.

The \ALPHABETA\ algorithm is traditionally presented as a recursive
function written in imperative style (see Figure~\ref{algorithms-figure}):
the function \texttt{alpha\_beta}
analyzes a game position $\POSITION \in \POSITIONS$ with
two additional parameters, $\alpha$ and $\beta$, each one denoting
a sort of \DEF{threshold} not to be overstepped during the 
incremental computation of the value of $\POSITIONS$. Whenever the threshold is
past the evaluation of an entire subtree is aborted, as it can be proven that
it will not contribute to the result.

The correctness of \ALPHABETA\ relies on the algebraic properties of the
$min$ and $max$ functions, notably their {\em mutual} distributive laws
{--- something we can {\em not} count on under our weaker hypotheses on
  $\TPLUS$ and $\TTIMES$} \cite{alpha-beta-logic--loddo--di-cosmo,alpha-beta-pruning-under-partial-orders,phd-thesis--loddo}.
\begin{figure*}%[ttt!]
\begin{tabular}{l|r}
{
%\begin{minipage}[b]{0.5\linewidth}
\begin{lstlisting}[language=pseudo,numbers=left]
function $\text{alpha\_beta}$($\POSITION : \POSITIONS;\ \alpha, \beta : \TRADITIONALUNIVERSE$):$\TRADITIONALUNIVERSE\ $
  if $\POSITION \in \TERMINALPOSITIONS$ then
    return $\PAYOFF(\POSITION)$
  $\POSITION_1 ... \POSITION_n := succ(\POSITION)$ $\#\ n \ge 1$
  if $\TURN(\POSITION)$ = $\PLAYER$ then
    $v := \alpha$
    for $i$ from 1 to $n$
        and while $\beta <_\TRADITIONALUNIVERSE v$ do
      $v := min\{v, \text{alpha\_beta}(\POSITION_i, v, \beta)\}$
  else # $\TURN(\POSITION)$ = $\OPPONENT$
    $v := \beta$
    for $i$ from 1 to $n$
        and while $v <_\TRADITIONALUNIVERSE \alpha$ do
      $v := max\{v, \text{alpha\_beta}(\POSITION_i, \alpha, v)\}$
  return $v$
\end{lstlisting}
}
&
%\end{minipage}
%\hfill
%\begin{minipage}[b]{0.5\linewidth}
{
\begin{lstlisting}[language=pseudo,numbers=none]
function $\text{tropical}$($\POSITION : \POSITIONS;\ \alpha : \UNIVERSE$):$\UNIVERSE$
  if $\POSITION \in \TERMINALPOSITIONS$ then
    return $\PAYOFF(\POSITION)$
  $\POSITION_1 ... \POSITION_n := succ(\POSITION)$ $\#\ n \ge 1$
  if $\TURN(\POSITION)$ = $\PLAYER$ then
    $v := \alpha$
    for $i$ from 1 to $n$ do
      $\#\ \text{do not prune at }\PLAYER\text{'s level}$
      $v := v \TPLUS \text{tropical}(\POSITION_i, v)$
  else $\#\ \TURN(\POSITION)$ = $\OPPONENT$
    $v := \text{tropical}(\POSITION_1, \alpha)$ $\text{\# No }\ONE_{\UNIVERSE}$
    for $i$ from 2 to $n$
        and while $\alpha \TPLUS v \neq \alpha$ do
      $v := v \TTIMES \text{tropical}(\POSITION_i, \alpha)$
  return $v$
\end{lstlisting}
}
%\end{minipage}
\end{tabular}
\caption{\label{algorithms-figure}Pruning algorithms: traditional
  \ALPHABETA\ pruning vs. tropical $\alpha$-pruning.
Notice that the tropical version has the first iteration of the second loop
unrolled, in order not to depend on the existence of a neutral element for $\TTIMES$.}
\end{figure*}
\\
Going back to our game semantics presentation we can model the \ALPHABETA's
behavior by adding four more rules --- two per player:
\begin{prooftree}
  \AxiomC{\NOTHING}
  \LeftLabel{[$\PLAYER$-\WILL]}
  \RightLabel{}
  \UnaryInfC{$\SPLUS \OPENSEQUENCE \alpha\ [\STIMES \OPENSEQUENCE \beta \ (\SPLUS \TERMS_1) \CLOSESEQUENCE\ \TERMS_2] \CLOSESEQUENCE\ \TERMS_3
    \ \TO \ % Don't remove the space before this comment
    \SPLUS \OPENSEQUENCE \alpha\ [\STIMES \OPENSEQUENCE \beta \ (\SPLUS \OPENSEQUENCE \alpha \CLOSESEQUENCE \TERMS_1) \CLOSESEQUENCE\ \TERMS_2 ] \CLOSESEQUENCE \ \TERMS_3$}
\end{prooftree}

\begin{prooftree}
  \AxiomC{\NOTHING}
  \LeftLabel{[$\OPPONENT$-\WILL]}
  \RightLabel{}
  \UnaryInfC{$\STIMES \OPENSEQUENCE \beta\ [\SPLUS \OPENSEQUENCE \alpha \ (\STIMES \TERMS_1) \CLOSESEQUENCE\ \TERMS_2] \CLOSESEQUENCE\ \TERMS_3
    \ \TO \ % Don't remove the space before this comment
    \STIMES \OPENSEQUENCE \beta\ [\SPLUS \OPENSEQUENCE \alpha \ (\STIMES \OPENSEQUENCE \beta \CLOSESEQUENCE \TERMS_1) \CLOSESEQUENCE\ \TERMS_2 ] \CLOSESEQUENCE \ \TERMS_3$}
\end{prooftree}

%%%%%%%%%%%%%%%%%% begin
\begin{minipage}{\linewidth}
\begin{minipage}{0.5\linewidth}
\begin{prooftree}
  \AxiomC{$\alpha \TPLUS \beta = \alpha$}
  \LeftLabel{[$\PLAYER$-cut]}
  \RightLabel{}
  \UnaryInfC{$\SPLUS \OPENSEQUENCE \alpha \ (\STIMES \OPENSEQUENCE \beta \CLOSESEQUENCE \TERMS_1 ) \CLOSESEQUENCE\ \TERMS_2
              \TO
              \SPLUS \OPENSEQUENCE \alpha \CLOSESEQUENCE \TERMS_2
              $}
\end{prooftree}
\end{minipage}
\begin{minipage}{0.5\linewidth}
\begin{prooftree}
  \AxiomC{$\beta \TTIMES \alpha = \beta$}
  \LeftLabel{[$\OPPONENT$-cut]}
  \RightLabel{}
  \UnaryInfC{$\STIMES \OPENSEQUENCE \beta \ (\SPLUS \OPENSEQUENCE \alpha \CLOSESEQUENCE \TERMS_1 ) \CLOSESEQUENCE\ \TERMS_2
              \TO
              \STIMES \OPENSEQUENCE \beta \CLOSESEQUENCE \TERMS_2
              $}
\end{prooftree}
\end{minipage}
\end{minipage}
%%%%%%%%%%%%%%%%%% end
\\\\
%% \begin{prooftree}
%%   \AxiomC{$\alpha \TPLUS \beta = \alpha$}
%%   \LeftLabel{[$\PLAYER$-cut]}
%%   \RightLabel{}
%%   \UnaryInfC{$\SPLUS \OPENSEQUENCE \alpha \ (\STIMES \OPENSEQUENCE \beta \CLOSESEQUENCE\ \TERMS_1 ) \CLOSESEQUENCE\ \TERMS_2
%%               \TO
%%               \SPLUS \OPENSEQUENCE \alpha \CLOSESEQUENCE \ \TERMS_2
%%               $}
%% \end{prooftree}

%% \begin{prooftree}
%%   \AxiomC{$\beta \TTIMES \alpha = \beta$}
%%   \LeftLabel{[$\OPPONENT$-cut]}
%%   \RightLabel{}
%%   \UnaryInfC{$\STIMES \OPENSEQUENCE \beta \ (\SPLUS \OPENSEQUENCE \alpha \CLOSESEQUENCE\ \TERMS_1 ) \CLOSESEQUENCE\ \TERMS_2
%%               \TO
%%               \STIMES \OPENSEQUENCE \beta \CLOSESEQUENCE \ \TERMS_2
%%               $}
%% \end{prooftree}

The initialization of $v$ at line~6 should be read as 
a first ``virtual'' move of the player, whose evaluation is the value $\alpha$
inherited from an ancestor (the grandparent in an alternate-turn
game). This explains the rationale of [$\PLAYER$-\WILL]\footnote{``Will''
  should be interpreted as ``bequeath'', in the sense of leaving something as
  inheritance to a descendent.}:
whenever subtrees are nested with turns $\PLAYER$-$\OPPONENT$-$\PLAYER$,
%at any time where \TODO{dobbiamo dire che in questo contesto i turni devono essere
%  alternati}
a grandparent may cross two levels %, \TODOQ{transversing} the opponent level
and ``give'' its grandchild its current accumulator as an initialization value.
Of course line~10 is the dual version for the opponent and [$\OPPONENT$-\WILL].

[$\PLAYER$-cut] and [$\OPPONENT$-cut] are a simple reformulation of the
\DEF{cut conditions} at lines 7 and 11, where the explicit order $<_{\TRADITIONALUNIVERSE}$
disappears\footnote{This is customary with lattices, when an order
 is derived from a
%defined with a
 {\em least-upper-bound} or {\em greatest-lower-bound} operation.}
from the condition, now expressed as an equality constraint in the rule premise: 
$\alpha \TPLUS \beta = \alpha$ represents the fact that the player would
prefer $\alpha$ over $\beta$.
Dually, $\beta \TTIMES \alpha = \beta$ means that the opponent
would prefer $\beta$ over  $\alpha$.

\begin{remark}[Non-alternate turn games]
\label{non-alternate-turn}
Notice that the cut rules can just fire in alternate-turn contexts: this choice
simplifies our exposition, but does not limit generality: see
Remarks~\ref{alternate-turn-transform} and~\ref{alternate-turn-transform-invariance}.
\end{remark}

The presence of two exactly symmetrical behaviors is quite evident in either
presentation; yet what we are interested in showing now is the fact that such
duality is quite incidental: it occurs in a natural way in actual two-player
games, yet many more search problems lend themselves to be modeled as games
despite lacking an intrinsic symmetry.
\\
\\
We can see \ALPHABETA\ as the union of two separate techniques
applied at the same time, breaking the algebraic symmetry of the
player/opponent operations: in the following we are going to eliminate the
rules [$\OPPONENT$-\WILL] and {}[$\OPPONENT$-cut], 
%and eventually turn
or equivalently to turn
\texttt{alpha\_beta} into \texttt{tropical} (see
Figure~\ref{algorithms-figure}), exploiting the weaker properties of
\DEF{tropical algebras} which only allow {\em one} threshold $\alpha$.

%%%%%%%%%%%%%%%%%%%%
\section{Tropical games}
\label{tropical-games}
As we are dealing with a relatively young research topic, it is not surprising
that the formalization of tropical algebras has not yet crystallized into a
standard form. And since several details differ among the various presentations,
%available in the literature,
 we have to provide our own definition:

\begin{definition}[Tropical Algebra]
\label{tropical-algebra-definition}
An algebra $(\UNIVERSE, \TPLUS, \TTIMES)$ is called a \DEF{tropical algebra} if
it satisfies the following properties for any $a$, $b$ and $c$ in $\UNIVERSE$:
\begin{enumerate}[(i)]
\item Associativity of $\TPLUS$:
$\ a \TPLUS (b \TPLUS c) = (a \TPLUS b) \TPLUS c$
\item Associativity of $\TTIMES$:
$\ a \TTIMES (b \TTIMES c) = (a \TTIMES b) \TTIMES c$
\item Left-distributivity of $\TTIMES$ with respect to $\TPLUS$:
$\ a \TTIMES (b \TPLUS c) = (a \TTIMES b) \TPLUS (a \TTIMES c)$
\item Right-distributivity of $\TTIMES$ with respect to $\TPLUS$:
$\ (a \TPLUS b) \TTIMES c = (a \TTIMES c) \TPLUS (b \TTIMES c)$
\end{enumerate}
\end{definition}

Some particular choices of $\UNIVERSE$, $\TPLUS$ and $\TTIMES$ are
widely used: the \DEF{min-plus algebra} is obtained by defining
$\UNIVERSE \EQD \mathbb{R} \UNION \{\PINF\}$,
$a \TPLUS b \EQD min\{a, b\}$
and, a little counter-intuitively\footnote{The particular symbols used for indicating
  $\TPLUS$ and $\TTIMES$ are justified by the analogy with $+$ and
  $\cdot$ in how the distributive law works.},
$a \TTIMES b \EQD a + b$.

Since $\UNIVERSE$ and $\TTIMES$ can also be usefully instantiated in other ways,
we will not simply adopt a min-plus algebra; anyway {\em in practice} we will
also choose $\TPLUS$ to be a minimum on $\UNIVERSE$, 
which {\em in practice} will
% always
have a total order. This seems to be the only reasonable
choice for the applications\footnote{Logic programming is an example of an
  interesting problem lending itself 
  to be interpreted as a combinatorial game on a universe with no total order
  \cite{phd-thesis--loddo,alpha-beta-logic--loddo--di-cosmo}. Anyway the
  underlying game is a symmetrical {\em inf-sup} rather than simply tropical.}
and helps to understand the idea, yet nothing in the theory depends on
the existence of the order.
Again, {\em in practice}, $\TPLUS$ will return one of its parameters, so if
needed we will always be able to trivially define a total order as
$x \leq y $~iff~$ x \TPLUS y = x$, for any $x$ and $y$ in $\UNIVERSE$.
%\NO{It will
%be \TODOQ{convenient} to assume the existence of an element $\TOP \in \UNIVERSE$
%such that for any $x \in \UNIVERSE$, $x \leq \TOP$.}
$\TPLUS$ and $\TTIMES$ will also tend to be commutative {\em in practice},
making one of the two distributive properties trivial.

We will not make any of the supplementary hypotheses above; on the other hand,
we will require the following \DEF{rationality hypothesis}\footnote{In lattice
  theory, the rationality hypothesis is one of the \DEF{absorption identities}.}:

\begin{definition}[Rationality]
\label{rationality-definition}
Let $(\UNIVERSE, \TPLUS, \TTIMES)$ be a tropical algebra such that
$\ZERO \in \UNIVERSE$ is a neutral element for $\TPLUS$
and
$\ONE \in \UNIVERSE$ is a neutral element for $\TTIMES$\footnote{The existence
  of neutral elements is not
  strictly necessary, but it simplifies many statements and proofs; without them several
  results should be given in both ``left'' and ``right'' forms.}.
We call the algebra \DEF{rational} if, for any
$x, y, z \in \UNIVERSE$ we have $x \TPLUS (y \TTIMES x \TTIMES z) = x$.
\end{definition}

Intuitively, the opponent accumulates costs with $\TTIMES$, ``worsening'' the game
value for the player: the player will always choose just $x$ over $x$ ``worsened'' by
something else.
Notice that the notion of rationality for two-player games in Game Theory also
includes the dual condition  
$x \TTIMES (y \TPLUS x \TPLUS z) = x$; such condition
does {\em not} hold in general for tropical games.

\begin{definition}[Tropical Game, Tropical Trees]
A \DEF{tropical game}
$G = (\SYNTAX, \ALGEBRA, \PAYOFF)$
is simply a game based on a {\em rational} tropical
algebra $\ALGEBRA$.
We call \DEF{tropical trees} all the game trees of a tropical game, and
\DEF{tropical pruning} the $\alpha$-pruning of a tropical tree.
\label{bi-tropical}
A \DEF{bi-tropical game} is a tropical game 
whose dual $G^{\BOTTOM} = (\SYNTAX^{\BOTTOM}, \ALGEBRA^{\BOTTOM}, \PAYOFF)$ is
also tropical, where $\ALGEBRA^{\BOTTOM} = (\UNIVERSE, \TTIMES, \TPLUS)$
 if $\ALGEBRA = (\UNIVERSE, \TPLUS, \TTIMES)$.
\end{definition}

%%%%%%
\subsection{Soundness of tropical pruning}
\label{pruning-soundness-proof}
%% \begin{proposition}[insertion property (obsolete version)]
%% \label{insertion-property-obsolete}
%% \NO{
%% Let $(\UNIVERSE, \TPLUS, \TTIMES)$ be a rational tropical algebra. Then
%% for any $x, \alpha, \beta \in \UNIVERSE$ we have
%% $\alpha \TPLUS (\beta \TTIMES x) = \alpha \TPLUS (\beta \TTIMES (\alpha \TPLUS x))$.
%% }
%% \end{proposition}
%% \begin{proof}
%% \NO{
%% $\alpha \TPLUS (\beta \TTIMES (\alpha \TPLUS x)) = $\ \{distributivity\}
%% $\alpha \TPLUS ((\beta \TTIMES \alpha) \TPLUS (\beta \TTIMES x)) = $\ \{associativity\}
%% $(\alpha \TPLUS (\beta \TTIMES \alpha)) \TPLUS (\beta \TTIMES x) = $\ \{rationality\}
%% $\alpha \TPLUS (\beta \TTIMES x)$
%% }
%% \qed
%% \end{proof}

\begin{proposition}[Insertion property]
\label{insertion-property}
Let $(\UNIVERSE, \TPLUS, \TTIMES)$ be a rational tropical algebra. Then
for any $x, y, \alpha, \beta \in \UNIVERSE$ we have
$\alpha \TPLUS (\beta \TTIMES x \TTIMES y) =
 \alpha \TPLUS (\beta \TTIMES (\alpha \TPLUS x) \TTIMES y)$.
\end{proposition}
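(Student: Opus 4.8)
The plan is to start from the right-hand side and show it collapses to the left-hand side, using in order the two distributivity laws of a tropical algebra, associativity of $\TPLUS$, and finally the rationality hypothesis of Definition~\ref{rationality-definition}.

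First I would apply left-distributivity of $\TTIMES$ with respect to $\TPLUS$ to rewrite $\beta \TTIMES (\alpha \TPLUS x)$ as $(\beta \TTIMES \alpha) \TPLUS (\beta \TTIMES x)$, and then right-distributivity (post-multiplying the result by $y$) together with associativity of $\TTIMES$ to obtain
\[
\beta \TTIMES (\alpha \TPLUS x) \TTIMES y \;=\; (\beta \TTIMES \alpha \TTIMES y) \TPLUS (\beta \TTIMES x \TTIMES y).
\]
Substituting this back, the right-hand side of the claim becomes $\alpha \TPLUS \bigl[(\beta \TTIMES \alpha \TTIMES y) \TPLUS (\beta \TTIMES x \TTIMES y)\bigr]$.

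Next I would use associativity of $\TPLUS$ to regroup this as $\bigl[\alpha \TPLUS (\beta \TTIMES \alpha \TTIMES y)\bigr] \TPLUS (\beta \TTIMES x \TTIMES y)$. The bracketed term is exactly an instance of the rationality identity with the substitution $x := \alpha$, $y := \beta$, $z := y$, so $\alpha \TPLUS (\beta \TTIMES \alpha \TTIMES y) = \alpha$. Hence the right-hand side equals $\alpha \TPLUS (\beta \TTIMES x \TTIMES y)$, which is the left-hand side, and we are done.

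I do not expect a genuine obstacle here: the statement is a short equational derivation that does not even use the neutral elements $\ZERO$ and $\ONE$. The only point requiring a little care is to apply distributivity on \emph{both} sides of the product, so that a subterm matching the rationality pattern $\beta \TTIMES \alpha \TTIMES y$ actually surfaces; once it does, associativity of $\TPLUS$ and a single use of rationality close the argument.
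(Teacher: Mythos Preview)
Your proof is correct and matches the paper's own argument almost exactly: both start from the right-hand side, use the two distributivity laws to expose the term $\beta \TTIMES \alpha \TTIMES y$, then apply associativity of $\TPLUS$ and a single instance of rationality to collapse to the left-hand side. The only cosmetic difference is that the paper applies right-distributivity before left-distributivity whereas you do the reverse, which is immaterial.
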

\begin{proof}[Using associativity implicitly]
%\TODO{I've implicitly used associativity everywhere, to make the notation leaner...}
$\alpha \TPLUS (\beta \TTIMES (\alpha \TPLUS x) \TTIMES y) =$ \{right-distributivity\}
$\alpha \TPLUS (\beta \TTIMES ((\alpha \TTIMES y) \TPLUS (x \TTIMES y))) =$ \{left-distributivity\}
$(\alpha \TPLUS (\beta \TTIMES \alpha \TTIMES y) \TPLUS (\beta \TTIMES x \TTIMES y) =$ \{rationality\}
$\alpha \TPLUS (\beta \TTIMES x \TTIMES y)$
\qed
\end{proof}

The insertion property is the semantic counterpart of the rule [$\PLAYER$-\WILL]:
it explains why we can ``transfer'' $\alpha$ down in the tree (or more
operationally,  why we can ``start'' from the same $\alpha$ when choosing with
$\TPLUS$ two plies below), without affecting the game value.

%Note that by contrast with rational {\em min-max}
%games, the dual property does {\em not} necessarily hold for $\beta$ in tropical
%rational games.

\begin{definition}[$\PLAYER$-irrelevance]
\label{player-irrelevance-definition}
Let $(\UNIVERSE, \TPLUS, \TTIMES)$ be a rational tropical algebra, and let
$\alpha, \beta \in \UNIVERSE$. Then we call $x \in \UNIVERSE$ \DEF{$\PLAYER$-irrelevant
with respect to $\alpha$ and $\beta$} if
$\alpha \TPLUS (\beta \TTIMES x) = \alpha$.
\end{definition}

Intuitively, as the value of an opponent-level tree, $x$ can't affect the
value of the game because the player will not give the opponent the
opportunity to be in that situation: in other word, the current
optimal move for the player doesn't change because of $x$.

\begin{lemma}[$\PLAYER$-irrelevance]
\label{player-irrelevance-lemma}
Let $(\UNIVERSE, \TPLUS, \TTIMES)$ be a rational tropical algebra, and
$\alpha, \beta \in \UNIVERSE$. If $\alpha \TPLUS \beta = \alpha$ then
any $x \in \UNIVERSE$ is $\PLAYER$-irrelevant with respect to $\alpha$ and $\beta$.
\end{lemma}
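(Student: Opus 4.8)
The plan is to derive the conclusion $\alpha \TPLUS (\beta \TTIMES x) = \alpha$ from the hypothesis $\alpha \TPLUS \beta = \alpha$ by a three-step rewriting, the only non-routine ingredient being a one-factor instance of rationality.

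First I would extract that instance. Taking the variable ``$y$'' of Definition~\ref{rationality-definition} to be the $\TTIMES$-neutral element $\ONE$ and using $\ONE \TTIMES \beta = \beta$ together with associativity of $\TTIMES$ (exactly as associativity is used implicitly in the proof of Proposition~\ref{insertion-property}), rationality specializes to $\beta \TPLUS (\beta \TTIMES x) = \beta$ for all $\beta, x \in \UNIVERSE$. Intuitively this is the absorption identity in its simplest shape: letting the opponent append one more cost $x$ to $\beta$ never produces something the player prefers to $\beta$ itself.

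With this in hand the computation is immediate: using the hypothesis to replace $\alpha$ by $\alpha \TPLUS \beta$, then associativity of $\TPLUS$, then the one-factor rationality instance, then the hypothesis once more,
\[
\alpha \TPLUS (\beta \TTIMES x)
 = (\alpha \TPLUS \beta) \TPLUS (\beta \TTIMES x)
 = \alpha \TPLUS (\beta \TPLUS (\beta \TTIMES x))
 = \alpha \TPLUS \beta
 = \alpha ,
\]
which is exactly $\PLAYER$-irrelevance of $x$ with respect to $\alpha$ and $\beta$ in the sense of Definition~\ref{player-irrelevance-definition}. Equivalently, reading $u \TPLUS w = u$ as ``$u$ is at least as good as $w$'', the hypothesis says $\alpha$ beats $\beta$, the rationality instance says $\beta$ beats $\beta \TTIMES x$, and associativity of $\TPLUS$ makes this relation transitive, so $\alpha$ beats $\beta \TTIMES x$.

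I do not expect a genuine obstacle here: the statement is essentially a corollary of rationality, and the only point requiring a little care is the bookkeeping of $\ONE$ and of associativity when specializing Definition~\ref{rationality-definition}, which we keep implicit as elsewhere in this section.
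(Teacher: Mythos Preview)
Your proof is correct and follows exactly the same four-step chain as the paper: hypothesis, associativity of $\TPLUS$, rationality, hypothesis. The only difference is cosmetic---you explicitly spell out the one-factor instance $\beta \TPLUS (\beta \TTIMES x) = \beta$ obtained by taking $y = \ONE$ in Definition~\ref{rationality-definition}, whereas the paper simply labels that step ``\{rationality\}'' and leaves the specialization implicit.
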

\begin{proof}
$\alpha \TPLUS (\beta \TTIMES x) = $\ \{hypothesis\}
$(\alpha \TPLUS \beta) \TPLUS (\beta \TTIMES x) = $\ \{associativity\}
$\alpha \TPLUS (\beta \TPLUS (\beta \TTIMES x)) = $\ \{rationality\}
$\alpha \TPLUS \beta = $\ \{hypothesis\}
$\alpha$
\qed
\end{proof}

\begin{definition}[Simulation]
\label{simulation-definition}
Given a tropical game,
we say that \DEF{a term $t'$ simulates a term $t$}, and we write $t \leq t'$, if
$t \TOCONTEXTSTAR v \IMPLIES t' \TOCONTEXTSTAR v$.
\end{definition}

\begin{lemma}[Tropical $\PLAYER$-\WILL\ simulation]
\label{alpha-down-propagation-lemma}
Given a tropical game $G = (\SYNTAX, \ALGEBRA, \PAYOFF)$
where
$\ALGEBRA = (\UNIVERSE, \TPLUS, \TTIMES)$, for any term sequence
$\alpha, \beta \in \UNIVERSE$,
$\VECTOR{t_0}, \VECTOR{t_1}, \VECTOR{t_2} \in Ter(G)^{*}$
$$
\SPLUS \OPENSEQUENCE \alpha \ [\STIMES \OPENSEQUENCE \beta \ (\SPLUS \TERMS_0) \CLOSESEQUENCE\ \TERMS_1] \CLOSESEQUENCE\ \TERMS_2
\ \le\ % Don't delete this space
\SPLUS \OPENSEQUENCE \alpha \ [\STIMES \OPENSEQUENCE \beta \ (\SPLUS \OPENSEQUENCE \alpha \CLOSESEQUENCE \ \TERMS_0) \CLOSESEQUENCE\ \TERMS_1] \CLOSESEQUENCE \ \TERMS_2
$$
\end{lemma}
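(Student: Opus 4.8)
The plan is to bring both terms to a common reduct and then close the diagram by confluence. Write $\ell$ and $r$ for, respectively, the left- and right-hand terms of the claimed $\le$; by Definition~\ref{simulation-definition} it suffices to show that $\ell \TOCONTEXTSTAR v$ implies $r \TOCONTEXTSTAR v$, so assume $\ell \TOCONTEXTSTAR v$ for some value $v \in \UNIVERSE$.

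First I would normalise the sub-terms that occur in both $\ell$ and $r$ --- and they all already occur in $\ell$. Since $\ell$ converges, Lemma~\ref{subterm-normalization-lemma} gives a value $x$ with $\SPLUS \TERMS_0 \TOCONTEXTSTAR x$, together with values to which each component of $\TERMS_0$, of $\TERMS_1$ and of $\TERMS_2$ reduces (the $\TPLUS$-combination of those of $\TERMS_0$ being $x$, by confluence). Replaying these reductions inside $\ell$ by means of [Context], then firing [$\OPPONENT$-reduce] and [Return] on the inner $\STIMES$-node, collapses it to a single value $\beta \TTIMES x \TTIMES y$, where $y$ is the $\TTIMES$-product of the values produced by $\TERMS_1$ (read as $\ONE$ when $\TERMS_1$ is empty, using associativity of $\TTIMES$); one further [$\PLAYER$-reduce] step, combining the leading $\alpha$ with that value, yields
\[
\ell \TOCONTEXTSTAR \SPLUS \OPENSEQUENCE\, \alpha \TPLUS (\beta \TTIMES x \TTIMES y) \,\CLOSESEQUENCE\, \VECTOR{w} \;=:\; u,
\]
where $\VECTOR{w}$ lists the values produced by $\TERMS_2$. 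Now $r$ is obtained from $\ell$ by replacing the sub-term $\SPLUS \TERMS_0$ with $\SPLUS \OPENSEQUENCE \alpha \CLOSESEQUENCE \TERMS_0$, which reduces to $\alpha \TPLUS x$ rather than to $x$; so the very same sequence of steps gives $r \TOCONTEXTSTAR \SPLUS \OPENSEQUENCE\, \alpha \TPLUS (\beta \TTIMES (\alpha \TPLUS x) \TTIMES y) \,\CLOSESEQUENCE\, \VECTOR{w}$. By the Insertion Property (Proposition~\ref{insertion-property}) we have $\alpha \TPLUS (\beta \TTIMES x \TTIMES y) = \alpha \TPLUS (\beta \TTIMES (\alpha \TPLUS x) \TTIMES y)$, so this last term is syntactically equal to $u$; hence $r \TOCONTEXTSTAR u$ as well.

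It then remains to close the diagram. A bare value admits no $\TOCONTEXT$-step, so $v$ is a normal form; from $\ell \TOCONTEXTSTAR v$ and $\ell \TOCONTEXTSTAR u$ the confluence of $\TOCONTEXT$ (it is in fact strongly confluent) yields a common reduct of $v$ and $u$, which can only be $v$. Hence $u \TOCONTEXTSTAR v$, and so $r \TOCONTEXTSTAR u \TOCONTEXTSTAR v$, i.e.\ $\ell \le r$. I expect the only genuine work to lie in the bookkeeping of the second step --- checking that the convenient reduction orders are really available, which is precisely where confluence and Lemma~\ref{subterm-normalization-lemma} earn their keep, since from $\ell$ we are free to pick any path, and disposing of the degenerate cases in which $\TERMS_1$ or $\TERMS_2$ is empty. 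The conceptual heart of the argument is the single appeal to the Insertion Property, which makes the two reducts coincide exactly.
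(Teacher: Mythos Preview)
Your proposal is correct and follows essentially the same route as the paper: invoke the Sub-term normalization Lemma to obtain values for the components of $\TERMS_0,\TERMS_1,\TERMS_2$, reduce both sides in parallel, and appeal to the Insertion Property (Proposition~\ref{insertion-property}) to make the two reducts coincide, with neutral elements handling empty $\TERMS_1,\TERMS_2$. The only cosmetic difference is that the paper pushes both sides all the way down to single values before comparing them, whereas you stop at the intermediate term $u$ and then invoke confluence explicitly to conclude $u \TOCONTEXTSTAR v$; this is the same confluence step the paper uses implicitly (uniqueness of normal forms) when identifying its final value with the assumed $v$.
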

\begin{proof}
By the Sub-term normalization Lemma, if $t$ converges there will exist some value sequences
$\VALUES_0, \VALUES_1, \VALUES_2 \in \UNIVERSE$ such that
$\TERMS_0 \TOSTAR \VALUES_0$,
$\TERMS_1 \TOSTAR \VALUES_1$,
$\TERMS_2 \TOSTAR \VALUES_2$;
let us call
$\VALUE_0$ the result of $\BIGTPLUSSMALLER \VALUES_0$,
$\VALUE_1$ the result of $\BIGTTIMESSMALLER \VALUES_1$ and
$\VALUE_2$ the result of $\BIGTPLUSSMALLER \VALUES_2$.
Then,
$$
\begin{array}{ c c c }
% Left column
\SPLUS \OPENSEQUENCE \alpha\ [\STIMES \OPENSEQUENCE \beta\ (\SPLUS \TERMS_0) \CLOSESEQUENCE\ \TERMS_1] \CLOSESEQUENCE\ \TERMS_2 &
% Middle column (empty)
&
% Right column
\SPLUS \OPENSEQUENCE \alpha\ [\STIMES \OPENSEQUENCE \beta\ (\SPLUS \OPENSEQUENCE \alpha \CLOSESEQUENCE\ \TERMS_0) \CLOSESEQUENCE\ \TERMS_1] \CLOSESEQUENCE \ \TERMS_2
\\
% Left column
\DOWNTOSTAR &
% Middle column (empty)
&
% Right column
\DOWNTOSTAR
\\
% Left column
\SPLUS \OPENSEQUENCE \alpha\ [\STIMES \OPENSEQUENCE \beta\ \VALUE_0\ \VALUE_1 \CLOSESEQUENCE]\ \VALUE_2 \CLOSESEQUENCE &
% Middle column (empty)
&
% Right column
\SPLUS \OPENSEQUENCE \alpha\ [\STIMES \OPENSEQUENCE \beta\ (\alpha \TPLUS \VALUE_0)\ \VALUE_1 \CLOSESEQUENCE]\ \VALUE_2 \CLOSESEQUENCE
\\
% Left column
\DOWNTOSTAR &
% Middle column
\text{\small \{Insertion\}} &
% Right column
\DOWNTOSTAR
\\
% Left column
\alpha \TPLUS [\beta \TTIMES \VALUE_0 \TTIMES \VALUE_1] \TPLUS \VALUE_2 &
% Middle column
= &
% Right column
\alpha \TPLUS [\beta \TTIMES (\alpha \TPLUS \VALUE_0) \TTIMES \VALUE_1] \TPLUS \VALUE_2 
% It's quite dirty to put the Q.E.D here in the source but this way it appears in
% the right place
\end{array}
$$
In the reductions above we implicitly assume that some sequences are non-empty;
the proof trivially generalizes to empty $\VECTOR{t_1}$ and $\VECTOR{t_2}$  by
using neutral elements. \qed
\end{proof}

%Adding the following rule does not alter semantics:

%% \begin{prooftree}
%% %  \AxiomC{\NOTHING}
%%   \AxiomC{}
%%   \LeftLabel{[$\alpha$-copy]}
%%   \RightLabel{}
%%   \UnaryInfC{
%%  $\SPLUS \OPENSEQUENCE \alpha \ [\STIMES \OPENSEQUENCE \beta \ (\SPLUS \TERMS_0)\CLOSESEQUENCE \ \TERMS_1]\CLOSESEQUENCE \ \TERMS_2
%%   \TO
%%   \SPLUS \OPENSEQUENCE \alpha \ [\STIMES \OPENSEQUENCE \beta \ (\SPLUS \OPENSEQUENCE \alpha \CLOSESEQUENCE \ \TERMS_0)\CLOSESEQUENCE \ \TERMS_1]\CLOSESEQUENCE \ \TERMS_2$}
%% \end{prooftree}
%% \qed
%% \end{corollary}

\begin{lemma}[Tropical cut simulation]
Given a tropical game $G = (\SYNTAX, \ALGEBRA, \PAYOFF)$
where
$\ALGEBRA = (\UNIVERSE, \TPLUS, \TTIMES)$, for any term sequence
$\alpha, \beta \in \UNIVERSE$,
$\VECTOR{t_0}, \VECTOR{t_1} \in Ter(G)^{*}$ we have that
if $\alpha \TPLUS \beta = \alpha$, then
$\SPLUS \OPENSEQUENCE \alpha\ (\STIMES \OPENSEQUENCE \beta \CLOSESEQUENCE \ \TERMS_0) \CLOSESEQUENCE \ \TERMS_1
\ \le\ % Don't delete this space
\SPLUS \OPENSEQUENCE \alpha \CLOSESEQUENCE \ \TERMS_1$.
\end{lemma}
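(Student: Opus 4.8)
The plan is to run the same ``unfold both sides down to values, then close the gap with an algebraic identity'' argument used for Lemma~\ref{alpha-down-propagation-lemma}, except that the identity that closes the gap is now the $\PLAYER$-irrelevance Lemma~\ref{player-irrelevance-lemma} rather than the Insertion property.

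First I would dispose of the trivial case: if the left-hand term never reduces to a value there is nothing to prove, so assume $\SPLUS \OPENSEQUENCE \alpha\ (\STIMES \OPENSEQUENCE \beta \CLOSESEQUENCE \TERMS_0) \CLOSESEQUENCE \TERMS_1 \TOCONTEXTSTAR v$ for some $v \in \UNIVERSE$. By the Sub-term normalization Lemma~\ref{subterm-normalization-lemma} every sub-term also converges, so there are value sequences $\VALUES_0, \VALUES_1$ with $\TERMS_0 \TOSTAR \VALUES_0$ and $\TERMS_1 \TOSTAR \VALUES_1$; write $v_0$ for the result of $\BIGTTIMESSMALLER \VALUES_0$ and $v_1$ for the result of $\BIGTPLUSSMALLER \VALUES_1$ (reading these as $\ONE$, resp.\ $\ZERO$, when the corresponding sequence is empty, exactly as in Lemma~\ref{alpha-down-propagation-lemma}). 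Propagating these reductions with [Context] and collapsing with [$\OPPONENT$-reduce], [Return] and [$\PLAYER$-reduce] gives
\[
\SPLUS \OPENSEQUENCE \alpha\ (\STIMES \OPENSEQUENCE \beta \CLOSESEQUENCE \TERMS_0) \CLOSESEQUENCE \TERMS_1
\ \TOCONTEXTSTAR\ \SPLUS \OPENSEQUENCE \alpha\ (\beta \TTIMES v_0) \CLOSESEQUENCE \VALUES_1
\ \TOCONTEXTSTAR\ \alpha \TPLUS (\beta \TTIMES v_0) \TPLUS v_1 ,
\]
whereas $\SPLUS \OPENSEQUENCE \alpha \CLOSESEQUENCE \TERMS_1 \TOCONTEXTSTAR \SPLUS \OPENSEQUENCE \alpha \CLOSESEQUENCE \VALUES_1 \TOCONTEXTSTAR \alpha \TPLUS v_1$. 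Since $\TOCONTEXT$ is (strongly) confluent and every value is a normal form, the value reachable from a term, if any, is unique, so $v = \alpha \TPLUS (\beta \TTIMES v_0) \TPLUS v_1$.

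It then only remains to identify this value with $\alpha \TPLUS v_1$, and this is the \emph{one} place where the hypothesis $\alpha \TPLUS \beta = \alpha$ is consumed: by the $\PLAYER$-irrelevance Lemma~\ref{player-irrelevance-lemma} it gives $\alpha \TPLUS (\beta \TTIMES v_0) = \alpha$, whence, by associativity of $\TPLUS$,
\[
v \ =\ \alpha \TPLUS (\beta \TTIMES v_0) \TPLUS v_1 \ =\ (\alpha \TPLUS (\beta \TTIMES v_0)) \TPLUS v_1 \ =\ \alpha \TPLUS v_1 .
\]
Hence $\SPLUS \OPENSEQUENCE \alpha \CLOSESEQUENCE \TERMS_1 \TOCONTEXTSTAR \alpha \TPLUS v_1 = v$, which is exactly the required simulation.

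I expect the only genuinely substantive step to be that one-line appeal to $\PLAYER$-irrelevance --- it is the semantic reason why the entire opponent subtree $(\STIMES \OPENSEQUENCE \beta \CLOSESEQUENCE \TERMS_0)$ may be discarded once the player already holds a move worth $\alpha$. Everything else is the same routine bookkeeping as in Lemma~\ref{alpha-down-propagation-lemma}: verifying that folding $\TPLUS$ left-to-right over $\alpha$ followed by $\VALUES_1$ collapses to $\alpha \TPLUS v_1$ regardless of the order in which the nondeterministic [$\PLAYER$-reduce] steps fire (confluence plus associativity), and handling the degenerate cases of empty $\VECTOR{t_0}$ or $\VECTOR{t_1}$ through the neutral elements $\ONE$ and $\ZERO$.
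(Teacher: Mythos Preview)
Your proposal is correct and follows exactly the paper's approach: the paper's proof is the single line ``Just like Lemma~\ref{alpha-down-propagation-lemma}, with $\PLAYER$-irrelevance at the end,'' and you have faithfully expanded that sketch, using the Sub-term normalization Lemma to reduce both sides to values and then invoking $\PLAYER$-irrelevance (in place of the Insertion property) to close the algebraic gap.
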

\begin{proof}
Just like Lemma~\ref{alpha-down-propagation-lemma}, with $\PLAYER$-irrelevance
at the end. \qed
%% Again, if the first term converges, by the Sub-term normalization Lemma there will exists $\VALUES_0$, $\VALUES_1$ such that
%% $\TERMS_0 \TOSTAR \VALUES_0$,
%% $\TERMS_1 \TOSTAR \VALUES_1$;
%% we will call
%% $\VALUE_0$ the result of $\BIGTTIMESSMALLER \VALUES_0$ and
%% $\VALUE_1$ the result of $\BIGTPLUSSMALLER \VALUES_1$.

%% \[
%% \begin{array}{ c c c }
%% % Left column
%% \SPLUS \OPENSEQUENCE \alpha\ [\STIMES \OPENSEQUENCE \beta \CLOSESEQUENCE\ \TERMS_0] \CLOSESEQUENCE \ \TERMS_1 &
%% % Middle column (empty)
%% &
%% % Right column
%% \SPLUS \OPENSEQUENCE \alpha \CLOSESEQUENCE\ \TERMS_1
%% \\
%% % Left column
%% \DOWNTOSTAR &
%% % Middle column (empty)
%% &
%% % Right column
%% \DOWNTOSTAR
%% \\
%% % Left column
%% \SPLUS \OPENSEQUENCE \alpha\ [\STIMES \OPENSEQUENCE \beta\ \VALUE_0 \CLOSESEQUENCE]\ \VALUE_1 \CLOSESEQUENCE &
%% % Middle column (empty)
%% &
%% % Right column
%% \SPLUS \OPENSEQUENCE \alpha\ \VALUE_1 \CLOSESEQUENCE
%% \\
%% % Left column
%% \DOWNTOSTAR &
%% % Middle column (empty)
%% \text{\small \{$\PLAYER$-Irrelevance\}} &
%% % Right column
%% \DOWNTOSTAR
%% \\
%% % Left column
%% \alpha \TPLUS (\beta \TTIMES \VALUE_0) \TPLUS \VALUE_1 &
%% % Middle column
%% = &
%% % Right column
%% \ \ \ % Don't delete the spaces
%% \alpha \TPLUS \VALUE_1
%% \ \ \ \qed
%% % It's quite dirty to put the Q.E.D here in the source but this way it appears in
%% % the right place
%% %\ \qed
%% \end{array}
%% \]
\end{proof}

\begin{theorem}[Tropical rule soundness]
Adding the rules [$\PLAYER$-\WILL] and [$\PLAYER$-cut] ``does not alter
semantics'', i.e. if a term $t$ converges to a value $v$ in a system without the
two new rules, it is guaranteed to have a reduction sequence converging to $v$
also in the extended system.
\qed
\end{theorem}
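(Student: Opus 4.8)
The statement as worded is only the forward implication: convergence of $t$ to $v$ by means of the six original rules must survive once [$\PLAYER$-\WILL] and [$\PLAYER$-cut] are added. The plan is to prove it directly, from the purely additive way in which the extended system is built, without appealing to the preceding simulation lemmas at all.

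First I would fix the set-up. The two new rules are introduced as further axioms of the base relation $\TO$ and are lifted to arbitrary terms by exactly the same [Context] rule that lifts the original six. Writing $\TOCONTEXT$ for the context-closed one-step relation of the base system and $\TOCONTEXT'$ for that of the extended system, the axioms underlying $\TOCONTEXT'$ form a superset of those underlying $\TOCONTEXT$; since closing under [Context] is monotone in the set of axioms, $\TOCONTEXT \subseteq \TOCONTEXT'$ as relations on $Ter(G)$. Reflexive-transitive closure preserves inclusion, so $\TOCONTEXTSTAR \subseteq (\TOCONTEXT')^{*}$.

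The conclusion is then immediate. If $t$ converges to $v$ in the base system, i.e.\ $t \TOCONTEXTSTAR v$, the inclusion yields $t\,(\TOCONTEXT')^{*}\,v$, which is exactly convergence of $t$ to $v$ in the extended system. Operationally, the reduction sequence witnessing $t \TOCONTEXTSTAR v$ uses only the six original rules, and those remain available after the enlargement; adding rules never invalidates an existing step, so the very same sequence replays verbatim. One may also note that $v$ stays terminal in the extended system, since neither [$\PLAYER$-\WILL] nor [$\PLAYER$-cut] has a bare value as its left-hand side.

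I expect no genuine obstacle for the direction actually stated; the single point requiring care is to read the claim literally and not slide into its converse. The real content --- that the two new rules cannot drive $t$ to a value \emph{different} from its base value --- lives in that converse, and that is precisely where the simulation lemmas (Tropical $\PLAYER$-\WILL\ simulation and Tropical cut simulation), together with confluence and normalization, are needed. For the implication as worded, they are not.
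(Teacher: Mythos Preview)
Your reading of the statement is literal and your argument is correct for that reading: enlarging the axiom set of a rewrite relation is monotone, reflexive--transitive closure preserves inclusion, and the very reduction sequence witnessing $t \TOCONTEXTSTAR v$ in the base system replays unchanged in the extended one. No appeal to the simulation lemmas is required for this direction.

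That said, your route and the paper's route are aimed at different things. The paper places the theorem immediately after Lemma~\ref{alpha-down-propagation-lemma} and the Tropical cut simulation lemma and closes it with a bare \qed, signalling that it is meant to follow from those lemmas. But those lemmas establish $\text{LHS} \le \text{RHS}$ in the sense of Definition~\ref{simulation-definition}, i.e.\ that firing a new rule preserves the \emph{base-system} value of the redex; lifted through [Context] and combined with confluence of the base system, this yields exactly the converse you isolate: any value reachable in the extended system coincides with the base value. In other words, the paper's machinery proves the substantive half of ``does not alter semantics'', while the ``i.e.'' clause in the theorem undersells that content and states only the trivial half --- which is what your monotonicity argument dispatches. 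Your proposal is therefore correct as a proof of the sentence actually written, and your closing paragraph correctly locates where the real work (and the paper's intended argument) lives.
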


%%%%%%%%%%%%%%%%%%%%%%%%%%%%%%%%%%%%%%%%%%%%%%%%%%%%%%%%%%%%%%%%%%%%%%%%%%%%

\section{Choose-How-To-Divide and Conquer}
\label{choose-how-to-divide-and-conquer}
According to the classical Divide and Conquer technique a
problem can be divided into subproblems, each of which will be solved
recursively until a minimal-size instance is found; sub-solutions will
then be recomposed.

In the traditional Divide and Conquer style, each division choice is final: it is
considered taken once and for all, and cannot be undone. By contrast we
present an alternative model
based on tropical games.
In the \DEF{Choose-How-To-Divide and Conquer} style we
work with non-deterministic choices in a solution space,
using a quality criterion to be optimized and some way of
``combining'' sub-solutions.

Of course many nondeterministic algorithms can be expressed this way: the
challenge is finding a suitable mapping to the tropical game concepts, in term
of both syntax and semantics (with the required properties). {The
problem must have both a suitable {\em syntactic} structure, and a {\em
  semantic} structure with the required properties.}

The action of {\em choosing a division} corresponds to a player node where the
$\TPLUS$ function (typically a minimization) returns the ``best'' option; the
points where {\em sub-solutions have their cost accumulated} (often something
similar to a sum, intuitively ``opposed'' to $\TPLUS$) become opponent nodes
where $\TTIMES$ combines the values of a subtree sequence into a single result.

Tropical trees have the desirable property of supporting $\alpha$-pruning,
with the potential of significantly cutting down the search space. 
{The more [$\PLAYER$-\WILL] and [$\PLAYER$-cut] can fire, the more
pruning is profitable: hence the problem should be represented as a
tropical game having {\em alternate turns} and
{\em branching factor greater than $2$ for $\OPPONENT$} at least ``often
enough''.}
\\
{Search problems abound in Artificial Intelligence, and in particular we
  suspect that more symbolic computation problems than one might expect can be
  modeled this way. We now proceed to show an unusual application of
  {\em Choose-How-To-Divide and Conquer}.}

% \TODO{la valutazione è indicativa, e merita uno studio a parte}

%%%%%%%%%%%%%%%%%%%%%%%%%%%%%%%%%%%%%%%%%%%%%%%%%%%%%%%%%%%%%%%%%%%%%%%%%%%%

\subsection{Parsing as a tropical game}
Let $\GRAMMAR$ be a given context-free grammar, defined over an alphabet of
terminals $A \ni a$ and nonterminals $N \ni X$.
For simplicity\footnote{Such restrictions can be lifted at the cost of some
  complexity, but supporting a larger class of grammars would be quite
  inessential for our demonstrative purposes.} let it have no 
  $\epsilon$-production, nor any productions with two consecutive nonterminals
  or a single nonterminal alone in the right-hand side.
%\TODO{Definite-Clause Grammars}
Right-hand sides will hence be of the form
$[a_1] X_1 a_2 X_2 ... a_n X_n [a_{n+1}]$, with $n \ge 0$ and at least one $a_i$.
Given a string of terminals $s \in A^{+}$ our problem is finding the ``best''
  parse tree of $s$ in $\GRAMMAR$; when $s$ contains some errors our ``best'' solution
  just ends up being {\em the least wrong}, according to some metric; just to
  keep things simple in this example out metric to minimize will be the total
  size of the substrings which cannot be matched, in terminals.
Sometimes we may wish to have the set of {\em all} best parses, instead of
being content with just one optimal solution.

\subsubsection{Syntax.}
The set of game positions is defined as 
$\POSITIONS = (A^{*} \times N) \DISJOINTUNION (A^{*} \times N)^{*}$, and the
turn function is
$\TURN(s, X) = \PLAYER$, $\TURN ((s_1, X_1) ... (s_k, X_k)) = \OPPONENT$. These
definitions become easy to understand once the successor function $succ$ is examined.

A player position has the form $\POSITION_{\PLAYER} = (s, X)$, since the player
has to parse a string $s$ with a nonterminal $X$. It has to choose a production
$X ::= [a_1] X_1 a_2 X_2 ...$\\% Just to force the output not to be ugly
$ a_n X_n [a_{n+1}]$,
 and match the 
terminals $a_i$ with the terminals in $s$, in the right order. 
Each possible match of {\em all} terminals, for each production of
$X$, is a valid player move generating {\em strictly smaller} subproblems for the
opponent: the nonterminals $X_i$ ``in between'' the matched terminals will have
to be matched to substrings of $s$ in the opponent position
$\POSITION_{\OPPONENT} = (s_1, X_1) ... (s_1, X_n)$, for some $n \ge 0$. If no
match exists with any production then $\POSITION_{\PLAYER}$ is terminal.

In an opponent position $\POSITION_{\OPPONENT} = (s_1, X_1) ... (s_1, X_n)$
the opponent has always exactly $n$ moves: the opponent will give the player
each pair $(s_i, X_i)$ to solve ``one at the time''. For this reason the
successor of an opponent position is equal to the position itself: it is the
sequence of the elements of $\POSITION_{\OPPONENT}$, itself a
sequence. An opponent position $\POSITION_{\OPPONENT}$ is terminal when it is empty.

Figure~\ref{grammar-and-parse-tree-figure} contains a practical example.

\begin{figure}
%\centering
\begin{minipage}[b]{0.45\linewidth}
$E ::= \textbf{n} \\
 E ::= \textbf{v} \\
 E ::= \textbf{( } E \textbf{ )} \\
 E ::= \textbf{let } \textbf{v = } E \textbf{ in } E\\
 E ::= \textbf{if } E \textbf{ then } E \textbf{ else } E\\
 E ::= E \textbf{ = } E \\
 E ::= E \textbf{ $+$ } E \\
 E ::= E \textbf{ \textasteriskcentered } E$
\end{minipage}
%\hfill
\begin{minipage}[b]{0.54\linewidth}
\includegraphics[height=3cm]{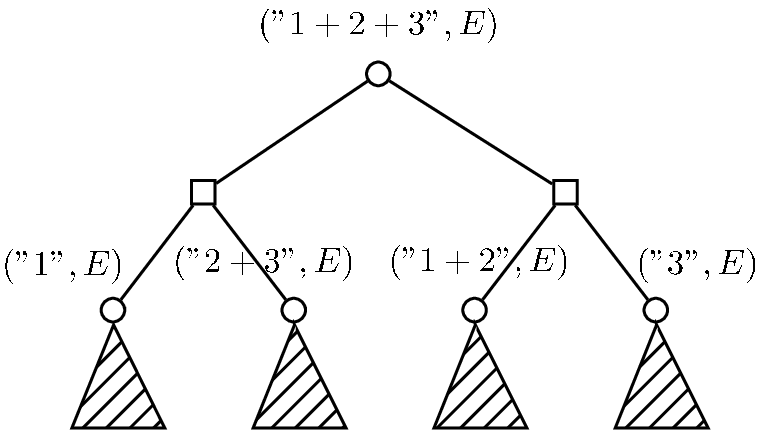}
\end{minipage}
\caption{\label{grammar-and-parse-tree-figure}
%% Il problema iniziale sarà diviso in:
%% 1. (``1'', E), (``2 + 3'', E)
%% 2. (``1 + 2'', E), (``3'', E)
  We use the simple grammar $\GRAMMAR$ above, with an intentionally high level
  of ambiguity, to parse the string $\texttt{"1 + 2 + 3"}$
  with $E$ as the start symbol. Circles represent $\SPLUS$ nodes, squares are for $\STIMES$.}
\end{figure}

\subsubsection{Semantics.}
We use a {\em min-plus} algebra for $\ALGEBRA = (\UNIVERSE, \TPLUS, \TTIMES)$: we
simply define $\UNIVERSE \EQD \NATURALS$; we
take $\TPLUS \EQD min$, since we want as few errors as possible; and finally
$\TTIMES \EQD +$: the number of total errors in the parse tree is equal to the sum of the number
of errors in all subtrees.

The payoff $\PAYOFF(\POSITION)$ is defined as the length in characters of the input string for
player positions (notice that the payoff is only defined on {\em terminal}
positions, so such a length is actually the number of unmatched characters), and
zero for opponent positions
(if $\POSITION_{\OPPONENT} = \OPENSEQUENCE\CLOSESEQUENCE$ then there are no
errors to accumulate: at the level above, the player matched {\em the whole} string):
 $p(s, X) \EQD \#s$,
$p(\OPENSEQUENCE\CLOSESEQUENCE) \EQD 0$.

\subsubsection{Experiments}
We implemented a prototype system\footnote{The prototype is freely available
  under the GNU GPL license
at the address\\\url{http://www-lipn.univ-paris13.fr/~loddo/aisc-2010}.} in ML supporting the grammar of
Figure~\ref{grammar-and-parse-tree-figure},
which can be configured to do a simple
exhaustive search or perform tropical $\alpha$-pruning. The prototype supports two policies:
\DEF{first-minimal (henceforth FM)} searches for only one optimal strategy at
  $\PLAYER$'s levels, and
  \DEF{all-minimals (henceforth AM)} generates a sequence of strategies with {\em non-increasing}
  cost.
%% [This is not very clear in the case of first-minimal]
%% We have compared our parser with a simple version which doesn't perform
%% the $\alpha$-pruning. We count the number of recursive calls of the
%% function $expr$ associated to our unique non-terminal symbol $E$,
%% and we also observe the execution time. In both cases, we can search
%% for the First Minimal solution (policy FM) or for All Minimal solutions
%% (policy AM). The policy FM (respectively AM) corresponds to accept
%% additional strategies only when the cost is \emph{strictly decreasing}
%% (respectively \emph{not increasing}). 
\\
Just as illustrative examples, we proceed to show our system behavior on
some input strings belonging to different categories.

{\em Non-ambiguous input:}
the input string
\texttt{"let x = 42 in x + if 84=42 then 55 else 77"}
is parsable in a unique way, so the FM policy is clearly the right choice.
Compared to an exhaustive search the $\alpha$-pruning FM version avoids $98\%$
of the recursive calls ($460$ vs $28473$) and its completion time is $4\%$.
% [Mi sembra ridondante]:
%Actually, the $\alpha$-pruning FM version tries every time to strictly reduce
%the cost of its strategy: when an optimal strategy is found at any level, the
%cut-condition will prevent the algorithm from searching anything else.
By setting the policy to AM
% [Se proprio vogliamo dire questo, diciamolo una volta per tutte sopra, non qui.]
%, i.e. forcing the program to return strategies with the same cost,
the number of recursive call grows a little, from $460$
to $671$ (still avoiding $97\%$ of the calls).

{\em Ambiguous input:}
with the input string \texttt{"let x = 84 = 42 = 21 in 1 + 2 * 3"},
which is parsable in several ways,
the the $\alpha$-pruning FM version avoids $99\%$ of the recursive
calls ($260$ vs $61980$), and the run time is $1\%$ of the
exhaustive-search version time.
The $\alpha$-pruning AM version still avoids $96\%$ of the
recursive calls ($2148$ vs $61980$), and its run time is $3\%$.

{\em ``Wrong'' input:}
with the input string \texttt{"if if if true then true else
false then 10 else (1+(2+)+3)"}, containing errors,
the $\alpha$-pruning FM version avoids $98\%$ of the recursive calls
($9640$ vs $494344$) and its run time is $3\%$, while the AM version
avoids $97\%$ of the recursive calls ($13820$ vs $494344$); the AM version's
run time is reduced to $3\%$.
The best strategy has value $6$, corresponding
to the size of the substring \texttt{"if true"} (blanks are
not counted) added to the size ($0$) of the empty substring delimited
by the tokens \texttt{"+"} and
\texttt{")"}. The $\alpha$-pruning algorithm has
\DEF{localized} errors, guessing that the user should fix her
string by replacing \texttt{"if true"} with something correct and
writing something correct between \texttt{"+"} and \texttt{")"} ---
having the size of the unmatched substrings as the payoff function yields
this ``smallest-incorrect-string'' heuristic.
{Of course other more elaborate criteria are also possible, such as ``minimum number
of errors''.}

{\em Memoization:}
on a completely orthogonal axis, the implementation may be configured to
perform \DEF{memoization}: when memoization is turned on all the already solved
positions are cached, so that they are not computed more than
once.
%\MAYBE{; unsurprisingly, memoization can dramatically improve
%  the time efficiency, in several applications including parsing
%  \cite{packrat}.
%  \TODOQ{È vero ma è banale. Dobbiamo proprio dirlo?}}
%
%\paragraph{Memoization.}
%
%The $\alpha$-pruning may be combined with a \emph{memoization}, a
%standard technique in combinatorial game theory, that prevent to re-visit
%already solved positions. Not surprising, this technique has been
%used in parsing problems \cite{packrat} with notable results
%(linear complexity).
We have compared a memoizing version of our tropical-$\alpha$-pruning parser
with a memoizing version performing exhaustive search.
In the first case above, the string
\texttt{"let x = 42 in x + if 84=42 then 55 else 77"}
 is now parsed with $131$ calls
instead of $460$, again saving $98\%$ of the calls ($131$ vs $7295$)
and cutting the run time to $1\%$.
\texttt{"let x = 84 = 42 = 21 in 1 + 2 {*} 3"} is now parsed with
$72$ calls instead of $260$, avoiding $99\%$ of the calls ($72$
vs $14443$) and reducing the run time to $7\%$.
The string
\texttt{"if if if true then true else false then 10 else (1+(2+)+3)"}
is parsed with $1206$ calls instead of $9640$, avoiding $96\%$ of
calls ($1206$ vs $36575$) and cutting the completion time to $10\%$.

At least in our small test cases, tropical $\alpha$-pruning and memoization
work well together: enabling either one does not significantly lessen the efficacy of
the other.

\section{Conclusions and future work}
\label{conclusions-and-future-work}
We have introduced and formally proved correct \DEF{tropical $\alpha$-pruning},
a variant of \ALPHABETA-pruning applicable to the \DEF{tropical games}
underlying \DEF{Choose-How-To-Divide and Conquer} problems.
As a practical example of the technique we have shown how the problem of
approximated parsing
and error localization
 can be modeled as a game, and how our pruning technique
can dramatically improve its efficiency; yet an asymptotic measure of the
visited node reduction would be a worthy development.

We suspect that many more problems can be formalized as tropical games, and
the problem of parsing itself can also definitely be attacked in a more general
way, lifting our restrictions on the grammar; tropical parsing might prove to
be particularly suitable for natural language problems, with their inherent
ambiguity.

%???
%\TODO{L'estensione ai giochi non alternati è notazionalmente pesante ma non difficile.}
%% More realistic implementations of tropical algorithms (for
%% parsing or other problems, and general-purpose) should also be built, evaluated
%% and compared with alternatives.
The correctness and efficiency of
\DEF{parallel} tropical $\alpha$-pruning implementations would be particularly interesting to study.
%\TODOQ{Abbiamo lasciato qualcosa aperto nella teoria?}
%\subsubsection*{Acknowledgments}
%\\
%\\
\subsubsection*{Acknowledgments}
Christophe Fouqueré first recognized tropical algebras in the properties
required by our formalization.

%%%%%%%%%%%%%%%%%%%%%%%%%%%%%%%%%%%%%%%%%%%%%%%%%%%%%%%%%%%%%%%%%%%%%%%%%%%%

\nocite{alpha-beta-pruning-under-partial-orders}

\bibliographystyle{splncs}
\bibliography{aisc-2010--loddo-saiu}

\end{document}